\documentclass[sigconf,nonacm]{acmart}

\usepackage{amsmath,amsfonts,bm}

\def\eqref#1{equation~\ref{#1}}

\def\1{\bm{1}}

\DeclareMathAlphabet{\mathsfit}{\encodingdefault}{\sfdefault}{m}{sl}
\SetMathAlphabet{\mathsfit}{bold}{\encodingdefault}{\sfdefault}{bx}{n}

\newcommand{\E}{\mathbb{E}}

\newcommand{\KL}{D_{\mathrm{KL}}}

\DeclareMathOperator{\Tr}{Tr}

\usepackage{amsmath,amsfonts}

\usepackage{amsthm}

\DeclareMathOperator{\Exp}{Exp}

\newtheorem{theorem}{\textbf{Theorem}}

\newtheorem{proposition}{\textbf{Proposition}}
\newtheorem{corollary}{\textbf{Corollary}}

\newtheorem{remark}{Remark}

\usepackage{float}

\usepackage{color}
\usepackage{tikz}
\usetikzlibrary{trees}
\usepackage{amsthm}
\usepackage{makecell}
\usepackage{tikz}
\usetikzlibrary{trees}
\usepackage{tabularx,colortbl}
\usepackage{threeparttable}
\usepackage{booktabs}
\usepackage{multirow}
\usepackage{subfig}
\usepackage{graphicx}
\usepackage{url}
\usepackage{enumitem}
\usepackage{tcolorbox}
\usepackage{xcolor}
\usepackage[ruled, vlined, linesnumbered]{algorithm2e}
\usepackage{hyperref}
\usepackage[noabbrev]{cleveref}
\crefname{equation}{Eq.}{Eqs.}
\usepackage{soul}
\usepackage{adjustbox}

\definecolor{DeepPink}{HTML}{FF1493}
\definecolor{Orchid}{HTML}{DA70D6}
\definecolor{Magenta}{HTML}{FF00FF}
\definecolor{Fuchsia}{HTML}{FF00FF}
\definecolor{LavenderPink}{HTML}{FFB6C1}
\definecolor{verylightgray}{rgb}{0.9, 0.9, 0.9}
\definecolor{lightred}{rgb}{1,0.8,0.8}

\AtBeginDocument{%
  }

\setcopyright{acmlicensed}
\copyrightyear{2018}
\acmYear{2018}
\acmDOI{XXXXXXX.XXXXXXX}
\acmConference[Conference acronym 'XX]{Make sure to enter the correct
  conference title from your rights confirmation email}{June 03--05,
  2018}{Woodstock, NY}
\acmISBN{978-1-4503-XXXX-X/2018/06}

\begin{document}

\title{GeoIB: Geometry-Aware Information Bottleneck via Statistical-Manifold Compression}

	\author{Weiqi Wang}
\affiliation{%
	\institution{University of Technology Sydney}
	\city{Sydney}
	\state{NSW}
	\country{Australia}}

\author{Zhiyi Tian}
\affiliation{%
	\institution{University of Technology Sydney}
	\city{Sydney}
	\state{NSW}
	\country{Australia}}

\author{Chenhan Zhang}
\affiliation{%
	\institution{University of Technology Sydney}
	\city{Sydney}
	\state{NSW}
	\country{Australia}}

\author{Shui Yu}
\affiliation{%
	\institution{University of Technology Sydney}
	\city{Sydney}
	\state{NSW}
	\country{Australia}}



\begin{abstract}
 
Information Bottleneck (IB) is widely used, but in deep learning, it is usually implemented through tractable surrogates, such as variational bounds or neural mutual information (MI) estimators, rather than directly controlling the MI $I(X;Z)$ itself. The looseness and estimator-dependent bias can make IB ``compression'' only indirectly controlled and optimization fragile. 

We revisit the IB problem through the lens of information geometry and propose a \textbf{Geo}metric \textbf{I}nformation \textbf{B}ottleneck (\textbf{GeoIB}) that dispenses with mutual information (MI) estimation. We show that $I(X;Z)$ and $I(Z;Y)$ admit exact projection forms as minimal Kullback–Leibler (KL) distances from the joint distributions to their respective independence manifolds. Guided by this view, GeoIB controls information compression with two complementary terms: (i) a distribution-level Fisher–Rao (FR) discrepancy, which matches KL to second order and is reparameterization-invariant; and (ii) a geometry-level Jacobian–Frobenius (JF) term that provides a local capacity-type upper bound on $I_\phi(Z;X)$ by penalizing pullback volume expansion of the encoder. We further derive a natural-gradient optimizer consistent with the FR metric and prove that the standard additive natural-gradient step is first-order equivalent to the geodesic update. We conducted extensive experiments and observed that the GeoIB achieves a better trade-off between prediction accuracy and compression ratio in the information plane than the mainstream IB baselines on popular datasets. GeoIB improves invariance and optimization stability by unifying distributional and geometric regularization under a single bottleneck multiplier. The source code of GeoIB is released at \url{https://anonymous.4open.science/r/G-IB-0569}.




\end{abstract}

\begin{CCSXML}
	<ccs2012>
	<concept>
	<concept_id>10010520.10010553.10010562</concept_id>
	<concept_desc>Security and privacy;</concept_desc>
	<concept_significance>500</concept_significance>
	</concept>
	<concept>
	<concept_id>10010520.10010575.10010755</concept_id>
	<concept_desc>Computing methodologies~Machine learning</concept_desc>
	<concept_significance>300</concept_significance>
	</concept>
	</ccs2012>
\end{CCSXML}

\ccsdesc[500]{Computing methodologies~Machine learning}

\keywords{Information Bottleneck, Manifold Representation, Compression}


\maketitle
\thispagestyle{plain}
\pagestyle{plain} 
 \section{Introduction}


The Information Bottleneck (IB) principle \citep{tishby2000information} casts representation learning as extracting a representation $Z$ from $X$ that preserves only what is useful for predicting $Y$. Concretely, one seeks an encoder $q_\phi (z\mid x)$ such that $Z$ carries as much information about $Y$ as possible while remaining maximally compressed with respect to $X$, which can be formulated as:
\begin{equation}
	\max_{\phi }\ I_\phi (Z;Y)\quad \text{s.t.}\quad I_{\phi}(X;Z)\le R,
\end{equation}
where $I(\cdot;\cdot)$ denotes the mutual information, $\phi$ are the parameters of the encoder, and $R$ sets the compression budget. Here, $I_\phi (\cdot;\cdot)$ is computed under the data distribution $p(x,y)$ and the encoder $q_{\phi}(z \mid x)$.
To address this constrained optimization problem, the IB method (\citep{tishby2000information,alemi2016deep}) introduces a positive Lagrange multiplier $\beta$, transforming the problem into 
\begin{equation}
	\min_{\phi} -I_{\phi}(Z;Y) + \beta I_{\phi}(X;Z),
\end{equation}
where $\beta \geq 0 $ (the ``bottleneck'' parameter) balances predictive sufficiency against compression. 


The IB principle is appealing because it formalizes what constitutes a useful representation via a fundamental balance between compression and predictive sufficiency \citep{alemi2016deep,tishby2000information,wu2020learnability}. A wide range of variants of IB \citep{wan2021multi, yang2025structured, yu2024cauchyschwarz, zhai2022adversarial} have been adopted across diverse applications, including image segmentation \citep{xu2024sctnet}, domain generalization \citep{li2022invariant}, semantic communication \citep{xie2023robust,wang2024scu}, and privacy compression \citep{dubois2021lossy,razeghi2023bottlenecks}. Moreover, prior work \citep{shwartz2017opening} suggests that IB provides a principled lens for interpreting certain training dynamics of deep neural networks and unveil universal attrition to interpret vision transformers \citep{hong2025comprehensive}.

\noindent
\textbf{Research Gap.}
Despite this success, most practical IB implementations optimize Euclidean surrogates of mutual information (MI), e.g., variational bounds as in VIB \citep{alemi2016deep} or neural MI estimators such as MINE \citep{belghazi2018mutual}. These surrogates disregard the statistical-manifold geometry of the encoder or posterior family, thus offering no explicit geometric guarantees over $I(X;Z)$, weakening the learning signal even at high MI, and decreasing both utility and compression. Recent extensions, such as structure IB \citep{yang2025structured, hu2024structured}, try to extract the structure information from the input. However, they still rely on Euclidean MI proxies, often degrading accuracy dramatically in strong-compression regime and making results highly sensitive to the compressive parameter $\beta$. This motivates the following research question.
 
\noindent
\textbf{Research Question.} \ul{How can we design an information bottleneck that remains stable, robust, and controllable under strong compression in high-dimensional representation learning?}

 \begin{figure*}[t]
 	\centering
 	\includegraphics[width=0.93\linewidth]{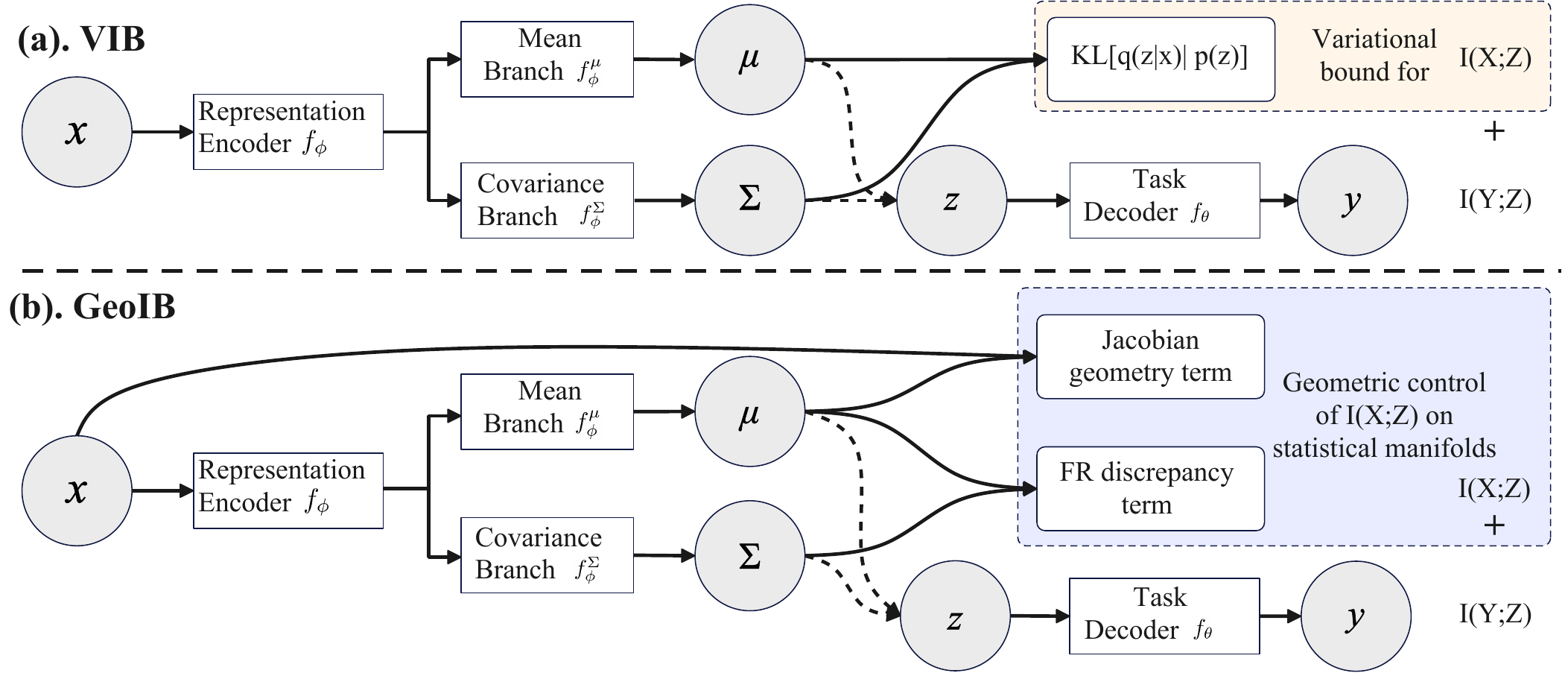}
 	\vspace{-4mm}
 	\caption{Comparison of VIB and GeoIB. Both models parametrize the encoder as $q_\phi(z\mid x)=\mathcal N(\mu(x),\mathrm{diag}(\sigma^2(x)))$ by a network $f_{\phi}$ and use a task decoder $p_{\theta}(y \mid z)$ to increase $I(Z;Y)$ by a network $f_{\theta}$. (a) VIB: compression is enforced by the variational upper bound. (b) GeoIB: replaces explicit MI estimation with two geometry-aware penalties computed deterministically on statistical manifolds: a Fisher–Rao quadratic proxy $\mathcal L_{\mathrm{FR}}$ and a Jacobian-Frobenius term $\mathcal L_{\mathrm{JF}}$. Solid arrows denote deterministic mappings; dashed arrows indicate reparameterized sampling $z=\mu+\sigma\odot\varepsilon$.}
 	\label{fig:figure1}
 \end{figure*}

\noindent
\textbf{Our Work.}
In this paper we introduce the Geometric Information Bottleneck (GeoIB), which reframes IB through the lens of statistical–manifold geometry. We first establish exact projection characterizations: both $I(X;Z)$ and $I(Z;Y)$ can be written as minimal Kullback–Leibler (KL) distances from the corresponding joint distributions to their independence submanifolds. Then, we design the GeoIB method, which regulates compression via two complementary components: (i) a \textit{distribution-level} Fisher–Rao (FR) discrepancy that agrees with KL to second order and is invariant under smooth reparameterizations of $z$; and (ii) a \textit{geometry-level} Jacobian–Frobenius (JF) penalty that yields a local capacity–type upper bound on $I_\phi(Z;X)$ by discouraging pullback volume expansion of the encoder. The two components make compression geometry consistent, stable to optimize, and faithful to the intended notion of information removal (distributional contraction plus capacity control). Finally, we derive an optimizer with respect to the Fisher–Rao (FR) metric whose update direction is the natural gradient, and we prove first-order equivalence with the geodesic update. 

To summarize, we make the following contributions:
\begin{itemize}[itemsep=0pt, parsep=0pt, leftmargin=*]
\item \textbf{Geometric Reformulation of IB.}
We show that both $I(Z;X)$ and $I(Z;Y)$ admit exact projection forms as minimal KL distances from the joint distributions to their respective independence manifolds, clarifying the geometric structure underlying the IB principle.

\item \textbf{A GeoIB Solution.}
We propose GeoIB, which controls compression via two complementary terms: (i) a distribution-level Fisher--Rao (FR) discrepancy and (ii) a geometry-level Jacobian--Frobenius (JF) penalty. 

\item \textbf{An Optimization Method for GeoIB.} We derive a natural-gradient optimizer consistent with the FR metric and prove that: the standard additive natural-gradient step is first-order equivalent to the geodesic (exponential-map) update.

\item \textbf{Empirical Validation.}
We conducted extensive experiments to compare with representative benchmarks. GeoIB attains favorable accuracy--compression trade-offs in the information plane relative to the state-of-the-art IB baselines, with improved robustness in strong compression regimes.
\end{itemize}

\section{Related Work} \label{related_work}

\subsection{Information Bottleneck}
The Information Bottleneck (IB) Lagrangian \citep{tishby2000information} has been widely studied in representation learning \citep{achille2018information,rosati2024representation} and practical training techniques \citep{xu2022infoat,li2025contrastive}. 
A practical deep implementation is the Variational Information Bottleneck (VIB) \citep{alemi2016deep}, which introduces a variational encoder $q_\phi(z\mid x)$, a prior $r(z)$, and a decoder/classifier $q_\theta(y\mid z)$, and uses the following bounds:
\begin{equation} \label{I_ZX}
	\begin{aligned}
		I(Z;X)
		&= \mathbb{E}_{p(x)}\!\big[ \mathrm{KL}\big(q_\phi(z\mid x)\,\|\,q_\phi(z)\big) \big] \\
		& \le \mathbb{E}_{p(x)}\!\big[ \mathrm{KL}\big(q_\phi(z\mid x)\,\|\,r(z)\big) \big] \\
		&= \mathbb{E}_{p(x)q_\phi(z\mid x)} \log q_\phi(z\mid x) \;-\; \mathbb{E}_{q_\phi(z)} \log r(z), 
	\end{aligned}
\end{equation}
\begin{equation} \label{I_ZY}
	\begin{aligned}
		I(Z;Y)
		&= H(Y) - H(Y\mid Z)
		\;\ge\; H(Y) + \mathbb{E}_{p(x,y)\,q_\phi(z\mid x)} \log q_\theta(y\mid z).  
	\end{aligned}
\end{equation}
Here $q_\phi(z)=\int q_\phi(z\mid x)p(x)\,dx$ is the encoder marginal. \Cref{I_ZX} uses $\mathrm{KL}\big(q_\phi(z)\,\|\,r(z)\big)\ge 0$ (with equality if $r(z)=q_\phi(z)$). \Cref{I_ZY} follows from the non-negativity of $\mathrm{KL}\!\big(p(y\mid z)\,\|\,q_\theta(y\mid z)\big)$ (cross-entropy bound).

Beyond VIB, a complementary line of work replaces the prior–KL surrogate with neural mutual-information estimators \citep{butakov2024mutual,kolchinsky2019nonlinear,franzese2023minde}. These methods train a critic to optimize variational bounds on MI and related quantities, enabling more direct optimization of the IB objective. Representative examples include MINE \citep{belghazi2018mutual}, which maximizes a Donsker--Varadhan lower bound; NWJ and related $f$-divergence bounds \citep{letizia2024mutual}; and CLUB, which provides an upper bound useful for penalizing $I(Z;X)$ \citep{cheng2020club}. These estimators remove the need to choose a prior $r(z)$, but introduce practical trade-offs (bias/variance, saturation for large MI, dependence on negatives and critic capacity), as analyzed by \citet{mcallester2020formal}.

While effective, most of the above approaches hinge on explicit or variational MI estimates in high dimensions, which can be brittle. We exploit the \emph{geometry} of the statistical manifold to implement information bottleneck. Specifically, in information geometry, the Fisher--Rao metric endows distributions with a Riemannian structure under which $\mathrm{KL}$ is locally the squared geodesic distance \citep{amari2000methods}; thus, a Fisher--Rao (FR) discrepancy $d_{\mathrm{FR}}^2\!\big(q_\phi(z\mid x),r(z)\big)$ provides a reparameterization-invariant surrogate for the $I(Z;X)$ compression term. Complementarily, viewing the encoder mean map $\mu_\phi: \mathcal X\!\to\!\mathcal Z$ as inducing a pullback metric $J_{\mu}^\top J_{\mu}$ suggests penalizing local volume distortion, which connects to contractive and Jacobian-based regularization \citep{ross2018improving}. We present the detailed introduction of the geometric information bottleneck in the methodology section.

\subsection{Information-Geometric Optimization}

Information-geometric optimization (IGO) views learning and optimization as moving on a statistical manifold of probability distributions equipped with the Fisher information as a Riemannian metric \citep{amari2000methods,amari2016information,amari2002information,ollivier2017information}. In this setting, the natural gradient \citep{amari1998natural} is the steepest-descent direction measured in the intrinsic geometry of the distribution family, rather than in an ambient Euclidean parameterization. Concretely, for a parametric family ${p_\theta}$ and an objective $\mathcal{L}(\theta)$, the natural-gradient update takes the form
\begin{equation}
\theta \leftarrow \theta - \eta\, F(\theta)^{-1}\nabla_\theta \mathcal{L}(\theta),
\end{equation}
where $F(\theta)$ is the Fisher information matrix. This update is invariant to smooth reparameterizations of $\theta$ and can be interpreted as following a locally KL-consistent direction, which often yields improved conditioning and stability compared to Euclidean gradients \citep{amari1998natural,amari2000methods}. 

 
\noindent
\textbf{Key Differences.}
Our method is inspired by the same information-geometric foundations, but it departs from IGO in both goal and use of geometry. First, IGO is primarily an optimization principle: it prescribes how to update distribution parameters so that progress is invariant to parameterization and respects the intrinsic geometry of the chosen distribution family \citep{ollivier2017information}. By contrast, our method is an information bottleneck method for representation learning: its central contribution is to redesign the compression mechanism so that it is geometry consistent. Second, the objects being optimized differ: IGO typically optimizes a search distribution over candidate solutions in a black-box setting \citep{ollivier2017information}, whereas we optimize an encoder posterior family $q_\phi(z\mid x)$ coupled with a task predictor, with the explicit aim of regulating the IB trade-off. Third, the invariances emphasized differ accordingly: IGO's defining invariances target objective-driven black-box optimization on distributions \citep{ollivier2017information}, while we leverage information geometry to obtain reparameterization-robust compression and capacity control for learned representations.



\section{Problem Statement from a Geometric View}\label{sec:problem}

Let $\mathcal{P}$ be the statistical manifold of all joint distributions over $(X,Z)$ and let the independence manifold be $\mathcal{I}_{XZ}=\{\,q(x)r(z)\,\}$. In exponential (e-) coordinates $\mathcal{I}_{XZ}$ is a e-flat submanifold \citep{amari2000methods,amari2016information}. For any $q$ and $r$, we have the information-geometric Pythagorean identity \citep{amari2000methods}
\begin{equation}  \label{eq:ig-pyth}
	\begin{aligned}
		\mathrm{KL}\!\big(p_\phi(x,z)\,\|\,q(x)r(z)\big)
		= & \underbrace{\mathrm{KL}\!\big(p_\phi(x,z)\,\|\,p(x)p_\phi(z)\big)}_{=\,I_\phi(X;Z)} \\
		&+ \underbrace{\mathrm{KL}\!\big(p(x)p_\phi(z)\,\|\,q(x)r(z)\big)}_{\ge 0},
	\end{aligned}
\end{equation}
whence $I_\phi(Z;X)=\min_{q,r}\mathrm{KL}\!\big(p_\phi(x,z)\,\|\,q(x)r(z)\big)$ with minimizer $(q,r)=(p(x),p_\phi(z))$. An identical relation holds for $(Y,Z)$ by replacing $x$ with $y$, $q$ with $q'$, and $r$ with $r'$:
\begin{equation} \label{eq_izy}
	\mathrm{KL}\!\big(p_\phi(y,z)\,\|\,q'(y)r'(z)\big)
	= I_\phi(Z;Y)+\mathrm{KL}\!\big(p(y)p_\phi(z)\,\|\,q'(y)r'(z)\big).
\end{equation}

\begin{figure}[t]
	\centering
	\includegraphics[width=0.91\linewidth]{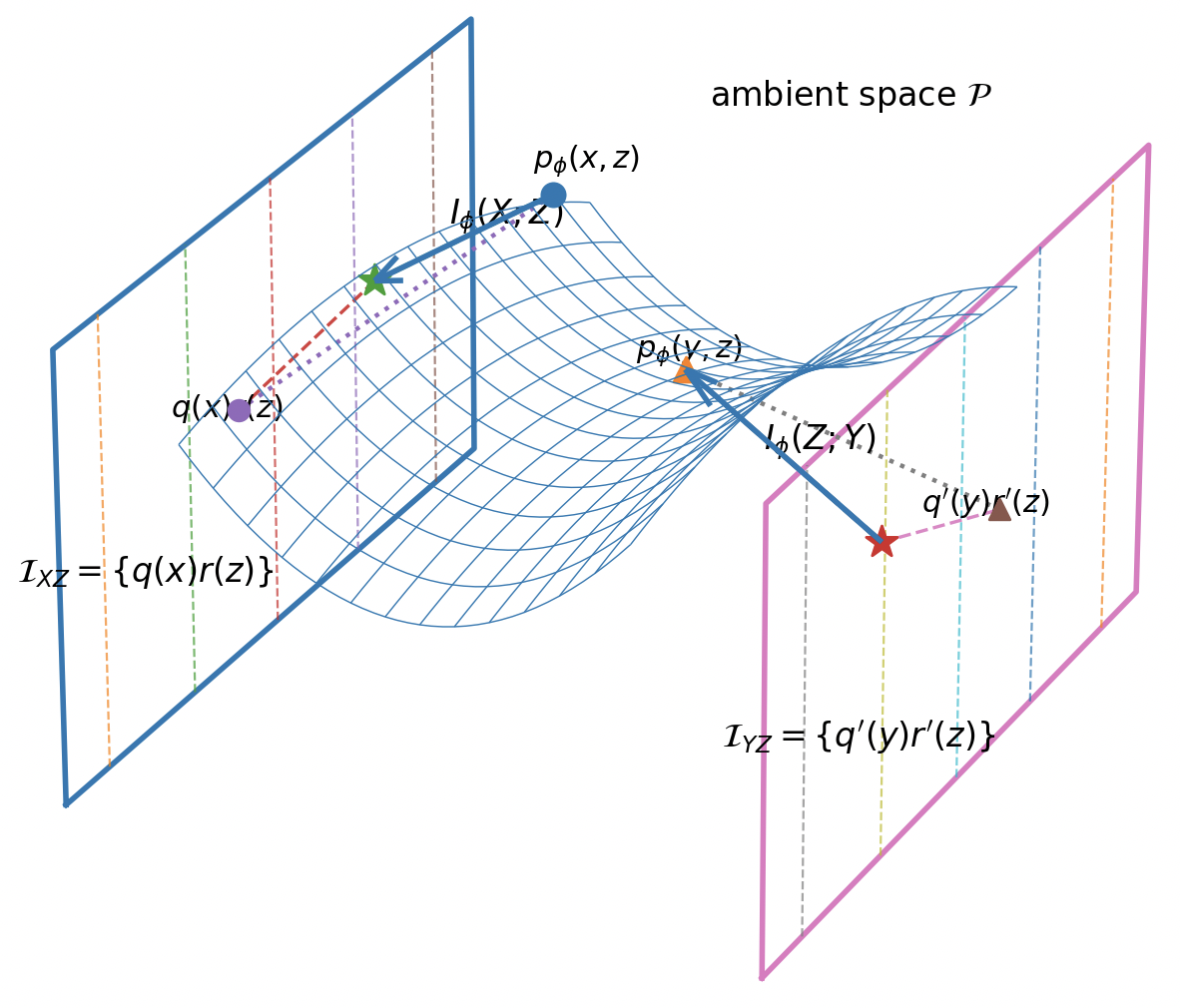}
	\vspace{-4mm}
	\caption{Information-geometric view of the Information Bottleneck objective. The ambient statistical manifold $\mathcal{P}$ contains the joint distributions $p_\phi(x,z)$ and $p_\phi(y,z)$. The blue and pink planes represent the e-flat independence manifolds $\mathcal{I}_{XZ}$ and $\mathcal{I}_{YZ}$. The arrows indicate KL I-projections onto these product manifolds, whose minimizers are $p(x)p_\phi(z)$ and $p(y)p_\phi(z)$; the corresponding KL distances equal $I_\phi(Z;X)$ and $I_\phi(Z;Y)$ via the information-geometric Pythagorean identity. Minimizing $\mathcal{L}_{\mathrm{IB}}(\phi)=\beta\,I_\phi(Z;X)-I_\phi(Z;Y)$ therefore pushes $p_\phi(x,z)$ towards $\mathcal{I}_{XZ}$ while pulling $p_\phi(y,z)$ away from $\mathcal{I}_{YZ}$.  
	}
	\label{fig:geoibproblem}
\end{figure}

We assume absolute continuity so that all KL terms are finite and Fubini's theorem \citep{kallenberg1997foundations} applies; in particular $p_\phi(x,z)\ll q(x)r(z)$ and $p_\phi(y,z)\ll q'(y)r'(z)$ for candidate product measures. We provide the detailed proof of \Cref{eq:ig-pyth} in \Cref{proof_of_ig_pyth}, and \Cref{eq_izy} can also be proved in the same way. The Information Bottleneck loss \citep{tishby2000information}, $\mathcal{L}_{\mathrm{IB}}(\phi)=\beta\,I_\phi(Z;X)-I_\phi(Z;Y)$, can thus be written exactly as
\begin{equation}\label{eq:ib-projection}
	\small
	\mathcal{L}_{\mathrm{IB}}(\phi)
	= \beta \min_{q,r}\ \mathrm{KL}\!\big(p_\phi(x,z)\,\|\,q(x)r(z)\big)
	\;-\; \min_{q',r'}\ \mathrm{KL}\!\big(p_\phi(y,z)\,\|\,q'(y)r'(z)\big),
\end{equation}
where the inner minima are achieved at $(q,r)=(p(x),p_\phi(z))$ and $(q',r')=(p(y),p_\phi(z))$. Optimizing \Cref{eq:ib-projection} over $\phi$ is therefore equivalent to:
\begin{itemize}
	\item \emph{push $p_\phi(x,z)$ toward the independence manifold} $\mathcal{I}_{XZ}$ by minimizing $\beta\,\mathrm{KL}\!\big(p_\phi(x,z)\,\|\,p(x)p_\phi(z)\big)$;
	\item \emph{pull $p_\phi(y,z)$ away from its independence manifold} $\mathcal{I}_{YZ}=\{q'(y)r'(z)\}$ by maximizing $\mathrm{KL}\!\big(p_\phi(y,z)\,\|\,p(y)p_\phi(z)\big)$.
\end{itemize}

We illustrate an example of geometric information bottleneck objectives in \Cref{fig:geoibproblem}. This geometric reformulation provides a concrete push–pull interpretation of the IB trade-off, and separates the geometry of the inner minimizers from the outer optimization over $\phi$ by expressing mutual information as a KL projection onto product manifolds, clarifying what is being optimized and where the optima occurs. Moreover, because the independence manifolds are e-flat, the formulation enables direct use of standard information-geometric tools such as KL projections and Pythagorean decompositions, which can simplify analysis and motivate principled approximations.



\section{Geometric Information Bottleneck Method} \label{OUL_approach}

In \Cref{sec:problem}, we recast the IB objective through information geometry: the mutual information $I_\phi(X;Z)$ and $I_\phi(Z;Y)$ are the minimal KL distances from the joint distributions $p_\phi(x,z)$ and $p_\phi(y,z)$ to their respective independence manifolds, turning the IB Lagrangian into a difference of projection distances. Based on this, our Geometric IB (GeoIB) controls compression with two complementary terms: (i) a \emph{distribution-level} Fisher--Rao (FR) discrepancy and (ii) a \emph{geometry-level} Jacobian–Frobenius (JF) term. During training, we propose a natural gradient descent method that combines the geometric information to achieve a better optimization effect.

\subsection{Distribution Proxy via the Fisher--Rao (FR) Quadratic}
We approximate the conditional–marginal divergence that defines the compression term by the local second-order FR metric:

\begin{equation}
\begin{aligned}
	I_\phi(Z;X)
	&= \E_{p(x)} \KL\!\big(q_\phi(z|x)\,\|\,p_\phi(z)\big)
	\approx \tfrac12\,\E_{p(x)} d_{\mathrm{FR} }\!\big(q_\phi(z|x), r(z)\big)^2,
	\label{eq:fr-proxy}
\end{aligned}
\end{equation}
where \(r(z)\) is a reference marginal (e.g., a standard normal or a learned prior), and $d_{\mathrm{FR}}(\cdot,\cdot)$ denotes the Fisher–Rao geodesic distance between distributions, i.e., the Riemannian distance induced by the Fisher information metric. The approximation follows from the local equivalence for smooth parametric families $\{p_\theta\}$, for $\theta'=\theta+\Delta$ with $\|\Delta\|$ small:
\begin{equation}
	\begin{aligned}
	\label{eq_smooth_p_theta_prime}
	\KL\!\big(p_{\theta'}\,\|\,p_\theta\big)
	= \tfrac12\,\Delta^\top F(\theta)\,\Delta + o(\|\Delta\|^2)
	= \tfrac12\, d_{\mathrm{FR}}\!\big(p_{\theta'},p_\theta\big)^2 + o(\|\Delta\|^2),
\end{aligned}
\end{equation}
with \(F(\theta)\) the Fisher information. Replacing \(p_{\theta'}\) by \(q_\phi(z|x)\) and \(p_\theta\) by \(r(z)\) in \Cref{eq_smooth_p_theta_prime} yields \Cref{eq:fr-proxy}. We provide the proof in \Cref{proof_of_FR}.

\noindent
\textbf{Diagonal Gaussian Example.}
If \(q_\phi(z|x)=\mathcal{N}\!\big(\mu_\phi(x),\mathrm{diag}(\sigma_\phi^2(x))\big)\) and \(r(z)=\mathcal{N}(0,I)\), the exact KL is
\begin{equation}
	\begin{aligned}
	\KL\!\big(q_\phi(z|x)\,\|\,\mathcal{N}(0,I)\big)
	=\tfrac12 \sum_{j=1}^{d_z} \Big(\mu_j(x)^2+\sigma_j(x)^2 - \log \sigma_j(x)^2 -1\Big).
	\label{eq:fr-gauss}
\end{aligned}
\end{equation}
Expanding \Cref{eq:fr-gauss} at \(\mu=0,\ \sigma^2=1\) gives
\begin{equation}
	\begin{aligned}
\KL\!\big(q_\phi\| \mathcal N(0,I)\big)
= \tfrac12\|\mu\|_2^2 \;+\; \tfrac14 \|\log \sigma^2\|_2^2 \;+\; O\!\big(\|\log \sigma^2\|_2^3\big),
\end{aligned}
\end{equation}
which equals \(\tfrac12\,d_{\mathrm{FR}}\!\big(q_\phi,\mathcal N(0,I)\big)^2\) up to second order.
In practice, when \(r=\mathcal N(0,I)\) we can optimize the closed-form KL in \Cref{eq:fr-gauss}:
\(\mathcal{L}_{\mathrm{KL}}(\phi)=\widehat{\E}_{x}\,\KL\!\big(q_\phi(z|x)\,\|\,r(z)\big)\).
If \(r\) is learned/complex (e.g., VampPrior/flow), we may optimize the FR proxy
\begin{equation}
	\begin{aligned}
\mathcal{L}_{\mathrm{FR}}(\phi)=\tfrac12\,\widehat{\E}_{x}\, d_{\mathrm{FR}}\!\big(q_\phi(z|x), r(z)\big)^2,
\end{aligned}
\end{equation}
and optionally monitor $\widehat I^{\mathrm{KL}}_{XZ}=\widehat{\E}_x\,\KL\!\big(q_\phi(z|x)\,\|\,\widehat p_\phi(z)\big)$ to gauge tightness.


\subsection{Geometric Bound via the Jacobian-Frobenius Term}
Assume a reparameterized encoder $ z \;=\; f_\phi(x) + \varepsilon,$ where $\varepsilon \sim \mathcal{N}\!\big(0,\Sigma(x)\big),$ and denote the Jacobian $J_f(x)=\partial f_\phi(x)/\partial x$.
The pullback metric on the input manifold is $g_x \;=\; J_f(x)^\top \Sigma(x)^{-1} J_f(x)$.

\noindent
\textbf{Local Capacity-type Upper Bound.}
Linearizing \(f_\phi\) around \(x\) and letting \(C_x\) be the local input covariance, a Gaussian channel upper bound yields
\begin{equation} 	\label{eq:jf-logdet-Cx}
	\begin{aligned}
	I_\phi(Z;X)
	& \leq \tfrac12\,\E_{p(x)}\!\left[\log\det\!\Big(I+\Sigma(x)^{-\tfrac12}J_f(x)\,C_x\,J_f(x)^\top\Sigma(x)^{-\tfrac12}\Big)\right].
\end{aligned}
\end{equation}
Under a unit local energy constraint \(C_x \preceq I\) (Loewner order) and the monotonicity of \(\log\det\) on PSD cone \(\mathbb{S}_+^d\), we obtain the pointwise bound
\begin{equation} 	\label{eq:jf-trace}
	\begin{aligned}
	I_\phi(Z;X)
	&\leq \tfrac12\,\E_{p(x)}\!\left[\log\det\!\Big(I+\Sigma(x)^{-\tfrac12}J_f(x)J_f(x)^\top\Sigma(x)^{-\tfrac12}\Big)\right]
	\\[-2mm]
	&= \tfrac12\,\E_{p(x)}\!\left[\log\det\!\Big(I+J_f(x)^\top\Sigma(x)^{-1}J_f(x)\Big)\right],  	 \\
	& \leq   \tfrac12\,\E_{p(x)}\,\Tr\!\big(\Sigma(x)^{-1}J_f(x)J_f(x)^\top\big)   \\
	&\;=\; \tfrac12\,\E_{p(x)}\big\|\Sigma(x)^{-\tfrac12}J_f(x)\big\|_F^2 \\
	&\;=:\; \tfrac12\,\mathrm{JF}(\phi).
\end{aligned}
\end{equation}

\textit{Proof sketch.}
Since $Z$ depends on $X$ only through $(f_\phi(X),\Sigma(X))$, we have the Markov chain
$X \to (f_\phi(X),\Sigma(X)) \to Z$ and thus $I(X;Z)=I\big((f_\phi(X),\Sigma(X));\,Z\big)$. Conditioning on $x$ and linearizing $f_\phi$ at $x$ while holding $\Sigma(x)$ fixed locally
yields a (local) linear Gaussian channel with gain $J_f(x)$ and noise covariance $\Sigma(x)$.
Under a unit local energy constraint on the input covariance $C_x \preceq I$, the Gaussian
maximizes entropy for fixed covariance, giving the log-det bound in \Cref{eq:jf-logdet-Cx}.
Using $\det(I+AB)=\det(I+BA)$, and applying
$\log\det(I+A)\le \Tr(A)$ for $A\succeq 0$ yields \Cref{eq:jf-trace}. \hfill$\square$

 
 \noindent
\textbf{Unbiased Hutchinson Estimator.}
The trace in \Cref{eq:jf-trace} can be estimated without forming explicit Jacobians.
For any $v\sim\mathcal{N}(0,I_{d_x})$, 
\begin{equation}
	\begin{aligned}
\mathbb{E}_v\big[\|\Sigma(x)^{-1/2}J_f(x)v\|_2^2\big]
= \mathrm{Tr}\!\big(\Sigma(x)^{-1}J_f(x)J_f(x)^\top\big).
\end{aligned}
\end{equation}
With $S$ i.i.d. probe vectors $\{v_s\}_{s=1}^S$, define the per-sample estimator
\begin{equation}
	\begin{aligned}
\widehat{\mathrm{JF}}(x)
:= \frac{1}{S}\sum_{s=1}^S \big\| \Sigma(x)^{-1/2}\, J_f(x)\, v_s \big\|_2^2,
\end{aligned}
\end{equation}
so that
\begin{equation}
	\begin{aligned}
	 \mathbb{E}_v[\widehat{\mathrm{JF}}(x)]
		= \mathrm{Tr}\!\big(\Sigma^{-1}J_f(x)J_f(x)^\top\big).
	\end{aligned}
\end{equation}
The training objective is the batch average
\begin{equation}
	\begin{aligned}
\mathcal{L}_{\mathrm{JF}}(\phi)
:= \widehat{\mathbb{E}}_{x\sim\text{batch}}\,\big[\widehat{\mathrm{JF}}(x)\big].
\end{aligned}
\end{equation}
In automatic differentiation frameworks, compute $J_f(x)v_s$ via Jacobian--vector
products (JVP), costing $O(S)$ forward-mode calls per $x$; $S=1$ or $2$ is typically sufficient.

 \noindent
\textbf{Isotropic/diagonal noise.}
If \(\Sigma(x)=\sigma(x)^2 I\), then
\begin{equation}
	\begin{aligned}
I_\phi(Z;X)
\;\le\;
\tfrac12\,\E_{p(x)}\log\det\!\Big(I+\tfrac{1}{\sigma(x)^2}J_fJ_f^\top\Big)
\;\le\;
\tfrac12\,\widehat{\E}_{x}\,\frac{\|J_f(x)\|_F^2}{\sigma(x)^2}.
\end{aligned}
\end{equation}
With a scalar floor $\sigma_{\min}^2 > 0$, this yields a simple, stable surrogate.

\textit{Geometric meaning.}
Because $g_x=J_f^\top\Sigma^{-1}J_f$, we have $\Tr(g_x)=\|\Sigma(x)^{-\tfrac12}J_f(x)\|_F^2$, the direction-averaged local stretch (Dirichlet energy density under the \(\Sigma^{-1}\) metric). Minimizing the JF term therefore controls average geodesic-length distortion, providing a principled compression surrogate for \(I_\phi(Z;X)\).



\subsection{Natural-Gradient Optimization for GeoIB}
\label{sec:natgrad}

Building on the above, we formulate the GeoIB objective as
\begin{equation}
	\label{eq:gib_objective0}
	\small
	\begin{aligned}
	\mathcal{L}_{\mathrm{GeoIB}}(\phi,\theta)
	\;=\; 
	\underbrace{ \E_{p(x,y)}\E_{q_\phi(z\mid x)}\big[-\log p_\theta(y\mid z)\big] }_{\uparrow\, I_\phi(Z;Y)} & \\
	\;+\;
	\underbrace{\beta\Big(\widehat{\mathcal{L}}_{\mathrm{FR}}(\phi)
		+\widehat{\mathcal{L}}_{\mathrm{JF}}(\phi)\Big)}_{\downarrow\, I_\phi(Z;X)}&,
	\end{aligned}
\end{equation}
where $\beta\!\ge\!0$ is the bottleneck multiplier. Minimizing $\E_{p(x,y)}\E_{q_\phi(z\mid x)}\big[-\log p_\theta(y\mid z)\big]$ reduces $H(Y\!\mid\!Z)$ and increases $I_\phi(Z;Y)$ (since $H(Y)$ is fixed); the FR and JF terms jointly penalize $I_\phi(Z;X)$.

\noindent
\textbf{Natural Gradient on the Encoder.}
Viewing $\{q_\phi(z\mid x)\}_\phi$ as a statistical manifold endowed with the Fisher--Rao metric, the natural gradient of any scalar objective $\mathcal{J}(\phi)$ is
\begin{equation}
	\begin{aligned}
	&\widetilde{\nabla}_\phi \mathcal{J}
	\;=\;
	F_\phi^{-1}\,\nabla_\phi \mathcal{J}, \\
	F_\phi
	\;:=\;
	\E_{p(x)}\E_{q_\phi(z\mid x)}
	\!\big[\, &
	\nabla_\phi \log q_\phi(z\mid x)\,\nabla_\phi \log q_\phi(z\mid x)^\top
	\,\big],
	\label{eq:natgrad-def}
\end{aligned}
\end{equation}
where the matrix $F_\phi$ in~\Cref{eq:natgrad-def} is the Fisher--Rao metric tensor. We update the encoder by preconditioning the Euclidean gradient of the full objective:
\begin{equation}
	\begin{aligned}
	\phi_{t+1}
	\;=\;
	\phi_t
	-
	\eta_\phi\,
	F_{\phi_t}^{-1}
	 \Big(
		\nabla_\phi\,\E_{x,y,z}[-\log p_\theta(y\mid z)] &\\
		+
		\beta\big[
		 \nabla_\phi \mathcal{L}_{\mathrm{FR}}
		+
		 \nabla_\phi \mathcal{L}_{\mathrm{JF}}
		\big]\Big)&,
	\label{eq:natgrad-step}
\end{aligned}
\end{equation}
which is first-order invariant under smooth reparameterizations of $\phi$ and couples the distribution-level FR and geometry-level JF signals through a single preconditioner $F_\phi$.
We can also prove that the additive natural-gradient step $\phi^{+}=\phi-\eta F_\phi^{-1}\nabla_\phi\mathcal{J}$ is a first-order approximation to the geodesic update in geometry. 

\begin{proposition}[Natural gradient equals the Riemannian gradient]\label{prop:riem-grad}
	Let $\mathcal M=\{p_\phi:\phi\in\Theta\subset\mathbb R^d\}$ be a regular statistical manifold
	endowed with the Fisher--Rao metric
	$g_\phi(u,v):=u^\top F_\phi v$, where
	$F_\phi=\E_{p(x)}\E_{q_\phi(z\mid x)}[\nabla_\phi\log q_\phi\,\nabla_\phi\log q_\phi^\top]$.
	For a scalar objective $\mathcal J:\mathcal M\to\mathbb R$, its Riemannian gradient at $\phi$ satisfies
	\begin{equation}
			\mathrm{grad}\,\mathcal J(\phi)\;=\;F_\phi^{-1}\,\nabla_\phi \mathcal J,
	\end{equation}
	i.e., the natural gradient $\widetilde\nabla_\phi\mathcal J:=F_\phi^{-1}\nabla_\phi\mathcal J$
	is exactly the Riemannian gradient on $(\mathcal M,g)$.
\end{proposition}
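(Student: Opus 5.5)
The plan is to use the defining property of the Riemannian gradient directly. On a Riemannian manifold $(\mathcal M,g)$, $\mathrm{grad}\,\mathcal J(\phi)$ is the unique tangent vector $w\in T_\phi\mathcal M$ such that
\begin{equation*}
g_\phi(w,v)=d\mathcal J_\phi(v)\quad\text{for all } v\in T_\phi\mathcal M .
\end{equation*}
Because $\mathcal M$ is a regular statistical manifold, the parameterization $\phi\mapsto p_\phi$ is an immersion with a nondegenerate Fisher matrix. This identifies $T_\phi\mathcal M$ with $\mathbb R^d$ through the coordinate basis $\partial_{\phi_1},\dots,\partial_{\phi_d}$. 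In these coordinates the differential is $d\mathcal J_\phi(v)=\nabla_\phi\mathcal J^\top v$, where $\nabla_\phi\mathcal J$ is the Euclidean gradient of $\phi\mapsto\mathcal J(p_\phi)$, and the metric is $g_\phi(w,v)=w^\top F_\phi v$.

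The key steps are as follows.

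\begin{enumerate}
\item Record that $F_\phi$ is symmetric, since it is an expectation of outer products, and positive semidefinite. Regularity then gives positive definiteness, so $F_\phi^{-1}$ exists and $g_\phi$ is a genuine inner product.
\item Write the defining identity in coordinates as
\begin{equation*}
w^\top F_\phi v=\nabla_\phi\mathcal J^\top v\quad\text{for all } v\in\mathbb R^d .
\end{equation*}
By symmetry of $F_\phi$ this reads $(F_\phi w-\nabla_\phi\mathcal J)^\top v=0$ for every $v$.
\item Choose $v=F_\phi w-\nabla_\phi\mathcal J$ to force $F_\phi w=\nabla_\phi\mathcal J$, hence $w=F_\phi^{-1}\nabla_\phi\mathcal J$.
\item Conclude with Riesz uniqueness: since $g_\phi$ is nondegenerate, this $w$ is the unique representer. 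This proves $\mathrm{grad}\,\mathcal J(\phi)=F_\phi^{-1}\nabla_\phi\mathcal J=\widetilde\nabla_\phi\mathcal J$, matching \Cref{eq:natgrad-def}.
\end{enumerate}

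As an optional remark, I would note that this expression is coordinate-invariant as a tangent vector. Under a reparameterization $\phi=\phi(\psi)$ with Jacobian $A$, we have $\nabla_\psi\mathcal J=A^\top\nabla_\phi\mathcal J$ and $F_\psi=A^\top F_\phi A$. Hence $F_\psi^{-1}\nabla_\psi\mathcal J=A^{-1}F_\phi^{-1}\nabla_\phi\mathcal J$, which is the same vector pushed forward. This supports the invariance claim made after \Cref{eq:natgrad-step}.

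The main obstacle is not the algebra but making the setting precise. The metric here is the expected Fisher matrix over $p(x)$ of the conditional family $q_\phi(z\mid x)$, so I will interpret $\mathcal M$ as the manifold of joint laws $p(x)q_\phi(z\mid x)$, whose Fisher information is exactly this $F_\phi$. I will then take the regularity hypothesis to mean that $F_\phi\succ 0$, i.e., the scores $\partial_{\phi_i}\log q_\phi$ are linearly independent in $L^2(p(x)q_\phi)$. If $F_\phi$ were only semidefinite, the parameterization would not be an immersion. The statement would then have to be read on the quotient, with $F_\phi^{+}$ in place of $F_\phi^{-1}$, and I would note this as a caveat rather than prove it.
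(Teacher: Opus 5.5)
Your proposal is correct and follows essentially the same route as the paper's proof sketch. Both characterize $\mathrm{grad}\,\mathcal J(\phi)$ by $g_\phi(\mathrm{grad}\,\mathcal J,v)=\mathrm D\mathcal J_\phi[v]$ for all $v$, write this in coordinates as $v^\top F_\phi\,\mathrm{grad}\,\mathcal J=v^\top\nabla_\phi\mathcal J$, and use positive definiteness of $F_\phi$ to conclude $\mathrm{grad}\,\mathcal J=F_\phi^{-1}\nabla_\phi\mathcal J$. Your added care spells out what the paper assumes at ``regular points'': the symmetry of $F_\phi$, reading $\mathcal M$ as the family of joint laws $p(x)q_\phi(z\mid x)$, and taking regularity to mean $F_\phi\succ 0$.
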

 
 
\textit{Proof sketch.} By definition, the Riemannian gradient $\mathrm{grad}\,\mathcal J(\phi)\in T_\phi\mathcal M\simeq\mathbb R^d$
 	is the unique vector field satisfying, for all tangent directions $v\in T_\phi\mathcal M$,
 	\[
 	\langle \mathrm{grad}\,\mathcal J(\phi),\, v\rangle_{g_\phi}
 	\;=\;
 	\mathrm D\mathcal J_\phi[v].
 	\]
 	Under the Fisher--Rao metric, the inner product is
 	$\langle u,v\rangle_{g_\phi}=u^\top F_\phi v$.
 	In coordinates, the differential equals the Euclidean pairing with the usual gradient:
 	$\mathrm d\mathcal J_\phi[v]=v^\top\nabla_\phi\mathcal J$.
 	Hence, for all $v$,
 	\[
 	v^\top F_\phi\,\mathrm{grad}\,\mathcal J(\phi)
 	\;=\;
 	v^\top \nabla_\phi \mathcal J.
 	\]
 	Since $F_\phi$ is positive definite at regular points, we conclude
 	$F_\phi\,\mathrm{grad}$ $\mathcal J(\phi)=\nabla_\phi\mathcal J$,
 	i.e. $\mathrm{grad}\,\mathcal J(\phi)=F_\phi^{-1}\nabla_\phi\mathcal J$.

\begin{proposition}[Steepest descent under the Fisher--Rao metric]\label{prop:steepest}
	Let $(\mathcal M,g)$ be endowed with the Fisher--Rao metric $g_\phi(u,v)=u^\top F_\phi v$ and let $\mathcal J$ be smooth.
	The direction of steepest descent per unit FR length solves
\begin{equation}
	\min_{\|v\|_{g_\phi}\le 1}\ \mathrm{D}\mathcal J(\phi)[v],
\end{equation}
and the (unit-norm) optimizer is $v_\star = - \frac{\mathrm{grad}\,\mathcal J(\phi)}{\|\mathrm{grad}\,\mathcal J(\phi)\|_{g_\phi}},$
	with the convention $v_\star=0$ if $\mathrm{grad}\,\mathcal J(\phi)=0$.
	In particular, by \Cref{prop:riem-grad}, its direction coincides with the negative natural gradient:
	$-\mathrm{grad}\,\mathcal J(\phi) \equiv -F_\phi^{-1}\nabla_\phi \mathcal J$.
\end{proposition}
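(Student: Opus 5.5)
The plan is to reduce the problem to a Cauchy--Schwarz inequality in the inner product $\langle\cdot,\cdot\rangle_{g_\phi}$, with \Cref{prop:riem-grad} supplying the link between the differential and the Riemannian gradient.

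First, I identify $T_\phi\mathcal M\simeq\mathbb R^d$ and use the defining property of the Riemannian gradient from \Cref{prop:riem-grad}. This rewrites the objective as
\[
\mathrm D\mathcal J(\phi)[v]=\langle \mathrm{grad}\,\mathcal J(\phi),v\rangle_{g_\phi}=v^\top F_\phi\,\mathrm{grad}\,\mathcal J(\phi).
\]
The problem therefore becomes minimizing a linear functional over the unit ball of the norm $\|v\|_{g_\phi}=(v^\top F_\phi v)^{1/2}$. This is a genuine norm because $F_\phi$ is positive definite at regular points, which is the same assumption used in \Cref{prop:riem-grad}.

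Second, I handle the degenerate case. If $\mathrm{grad}\,\mathcal J(\phi)=0$, the objective vanishes identically on the ball, so every feasible $v$ is optimal and the convention $v_\star=0$ is admissible.

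Otherwise, write $G=\mathrm{grad}\,\mathcal J(\phi)$ and apply Cauchy--Schwarz for $g_\phi$:
\[
\langle G,v\rangle_{g_\phi}\ge -\|G\|_{g_\phi}\|v\|_{g_\phi}\ge -\|G\|_{g_\phi}
\]
for all $\|v\|_{g_\phi}\le1$. The candidate $v_\star=-G/\|G\|_{g_\phi}$ is feasible, and substituting it gives $\langle G,v_\star\rangle_{g_\phi}=-\|G\|_{g_\phi}$, so the lower bound is attained.

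For uniqueness, equality in Cauchy--Schwarz forces $v$ to be a nonpositive multiple of $G$, and equality in the second inequality forces $\|v\|_{g_\phi}=1$. Together these give $v=v_\star$, so the minimizer is unique.

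Finally, invoking \Cref{prop:riem-grad} gives $G=F_\phi^{-1}\nabla_\phi\mathcal J$. Hence $v_\star$ is a positive multiple of $-F_\phi^{-1}\nabla_\phi\mathcal J$, which is the claimed direction statement.

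I do not expect a real obstacle here. The only points needing care are two. One is stating that positive definiteness of $F_\phi$ makes $g_\phi$ an inner product, so that Cauchy--Schwarz and its equality case apply. The other is being explicit that "steepest descent per unit FR length" means this linearized problem: the first-order change $\mathcal J(\phi+tv)=\mathcal J(\phi)+t\,\mathrm D\mathcal J(\phi)[v]+o(t)$, as opposed to a finite-step problem. As a cross-check, a Lagrangian argument gives the same answer. Setting $\nabla_v\big(v^\top\nabla_\phi\mathcal J+\lambda(v^\top F_\phi v-1)\big)=0$ yields $v\propto -F_\phi^{-1}\nabla_\phi\mathcal J$, and the sign is fixed by minimization.
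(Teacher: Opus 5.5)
Your proposal is correct and follows essentially the same route as the paper: rewrite $\mathrm{D}\mathcal J(\phi)[v]$ as $\langle \mathrm{grad}\,\mathcal J(\phi),v\rangle_{g_\phi}$, apply Cauchy--Schwarz in the $g_\phi$ inner product, and read off the equality case. Your explicit treatment of the degenerate case $\mathrm{grad}\,\mathcal J(\phi)=0$, of uniqueness, and of positive definiteness of $F_\phi$ fills in details the paper's sketch leaves implicit.
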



\textit{Proof sketch.} By the Riemannian gradient definition in \citep{amari2000methods}, $\mathrm{D}\mathcal J(\phi)[v] \\=\langle \mathrm{grad}\,\mathcal J(\phi), v\rangle_{g_\phi}$. Cauchy--Schwarz gives $\langle \mathrm{grad}\,\mathcal J, v\rangle_{g_\phi} \ge \\-\|\mathrm{grad}\,\mathcal J\|_{g_\phi}\,\|v\|_{g_\phi} \ge -\|\mathrm{grad}\,\mathcal J\|_{g_\phi}$, with equality iff $v$ is collinear with $-\mathrm{grad}\,\mathcal J$ and $\|v\|_{g_\phi}=1$.
\hfill$\square$

By \Cref{prop:steepest}, the steepest descent direction per unit FR length is $-\mathrm{grad}\,\mathcal J(\phi)$. We thus update along this direction using the exponential map.

\begin{theorem}[Geodesic update via the exponential map]\label{thm:exp-geodesic}
	Let $\Exp_\phi:T_\phi\mathcal{M}\to\mathcal{M}$ be the Riemannian exponential map of the FR
	metric. The discrete update
	\[
	\phi^{+}
	\;=\;
	\Exp_{\phi}\!\big(-\eta\,\mathrm{grad}\,\mathcal{J}(\phi)\big)
	\]
	lies on the unique FR geodesic $\gamma$ starting at $\phi$ with initial velocity
	$\dot\gamma(0)=-\eta\,\mathrm{grad}\,\mathcal{J}(\phi)$; i.e., $\phi^{+}=\gamma(1)$.
\end{theorem}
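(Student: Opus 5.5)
The theorem is essentially the definition of the Riemannian exponential map, so the plan is to unpack that definition and supply the existence, uniqueness and rescaling facts it relies on.

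\textbf{Step 1: the geodesic equation.} Since $F_\phi$ is smooth and positive definite at regular points, as already used in \Cref{prop:riem-grad}, $(\mathcal M,g)$ is a smooth Riemannian manifold. Its Levi-Civita connection has Christoffel symbols
\[
\Gamma^k_{ij}=\tfrac12 (F^{-1})^{kl}\big(\partial_i F_{jl}+\partial_j F_{il}-\partial_l F_{ij}\big).
\]
In the coordinates $\phi$, geodesics solve the second-order ODE $\ddot\gamma^k+\Gamma^k_{ij}\dot\gamma^i\dot\gamma^j=0$. Its coefficients are smooth, hence locally Lipschitz.

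\textbf{Step 2: existence and uniqueness.} Picard--Lindel\"of applied to the first-order system in $(\gamma,\dot\gamma)$ gives, for each initial condition $\gamma(0)=\phi$, $\dot\gamma(0)=v$, a unique maximal solution $\gamma_v$. Take $v=-\eta\,\mathrm{grad}\,\mathcal J(\phi)$, which by \Cref{prop:riem-grad} equals $-\eta F_\phi^{-1}\nabla_\phi\mathcal J$. This is a tangent vector at $\phi$, so $\gamma_v$ is the unique FR geodesic with $\gamma(0)=\phi$ and $\dot\gamma(0)=v$.

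\textbf{Step 3: the exponential map evaluates this geodesic at time one.} By definition, $\Exp_\phi(v):=\gamma_v(1)$ whenever $1$ lies in the maximal interval of $\gamma_v$. Therefore $\phi^+=\Exp_\phi(-\eta\,\mathrm{grad}\,\mathcal J(\phi))=\gamma_v(1)$, and $\phi^+$ lies on $\gamma_v$.

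To justify reading the statement along the ray of step sizes, I will also record the rescaling lemma $\gamma_{cv}(t)=\gamma_v(ct)$. It holds because $t\mapsto\gamma_v(ct)$ satisfies the same geodesic ODE, since the equation is homogeneous of degree two in the velocity, and has initial velocity $cv$. So uniqueness from Step 2 identifies the two curves. Consequently all updates $\Exp_\phi(-\eta'\,\mathrm{grad}\,\mathcal J)$ with $\eta'\ge 0$ lie on one geodesic through $\phi$ in the steepest-descent direction of \Cref{prop:steepest}.

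\textbf{Main obstacle: well-definedness at $t=1$.} $\Exp_\phi$ is only guaranteed to be defined on a neighborhood of $0$ in $T_\phi\mathcal M$, unless $\mathcal M$ is geodesically complete (Hopf--Rinow). I will handle this by assuming $\eta$ is small enough that $-\eta\,\mathrm{grad}\,\mathcal J(\phi)$ lies in the domain of $\Exp_\phi$. Such $\eta$ exists: the domain is open and star-shaped about $0$, which follows from the rescaling lemma and continuous dependence of ODE solutions on initial data. Alternatively, I will state completeness as a standing assumption. In either case the trajectory must remain in the open parameter set $\Theta$ up to time $1$, which I will note explicitly.
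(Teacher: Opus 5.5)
Your proposal is correct and follows the paper's proof essentially step for step: set $v=-\eta\,\mathrm{grad}\,\mathcal J(\phi)$, invoke existence and uniqueness for the Levi--Civita geodesic ODE, and read off $\phi^{+}=\gamma_v(1)$ from the definition of $\Exp_\phi$. Your treatment of the time-one issue is slightly tighter than the paper's. You work with the maximal solution and take $\eta$ small using the open, star-shaped domain of $\Exp_\phi$, or else assume completeness. The paper only produces $\gamma_v$ on $(-\varepsilon,\varepsilon)$ and then appeals to the injectivity radius, which is a stronger condition than the mere definedness actually needed.
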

See proof in \Cref{proof_of_t1}.

\begin{corollary}[First-order equivalence to the additive update]\label{cor:first-order}
	Let $\mathcal{R}_\phi$ be any retraction on $\mathcal{M}$ satisfying
	$\mathcal{R}_\phi(0)=\phi$ and $\mathrm{D}\mathcal{R}_\phi(0)=\mathrm{Id}$ (the exponential
	map is a canonical retraction). In local coordinates,
	\[
	\Exp_\phi\!\big(-\eta\,F_\phi^{-1}\nabla_\phi\mathcal{J}\big)
	\;=\;
	\phi - \eta\,F_\phi^{-1}\nabla_\phi\mathcal{J}
	\;+\; O(\eta^2).
	\]
	By \Cref{prop:riem-grad}, $\mathrm{grad}\,\mathcal J=F_\phi^{-1}\nabla_\phi\mathcal J$. 
	Hence the common additive natural-gradient step
	$\phi^{+}=\phi-\eta F_\phi^{-1}\nabla_\phi\mathcal{J}$ is a first-order approximation to the geodesic update.
\end{corollary}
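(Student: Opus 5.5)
The plan is to work in a single coordinate chart around $\phi$ and Taylor-expand the geodesic $\gamma(t)=\Exp_\phi(t\,v)$ with $v:=-F_\phi^{-1}\nabla_\phi\mathcal J$. By \Cref{thm:exp-geodesic}, $\gamma$ is the unique FR geodesic with $\gamma(0)=\phi$ and $\dot\gamma(0)=v$. Its rescaling property $\Exp_\phi(\eta v)=\gamma(\eta)$ then reduces the claim to a statement about the curve $\gamma$ at time $\eta$.

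First I will recall that, in local coordinates, $\gamma$ solves the geodesic ODE
\[
\ddot\gamma^k(t)+\Gamma^k_{ij}(\gamma(t))\,\dot\gamma^i(t)\dot\gamma^j(t)=0 .
\]
Here the Christoffel symbols $\Gamma^k_{ij}$ are those of the FR metric $g_\phi=F_\phi$. Since $\mathcal M$ is regular, $F_\phi$ is smooth and positive definite, so the $\Gamma^k_{ij}$ are smooth. Standard ODE theory then gives a $C^2$ (indeed smooth) solution on a neighbourhood $[0,t_0]$ of $t=0$. The rescaling identity $\gamma_{v}(\eta)=\gamma_{\eta v}(1)$ follows from uniqueness, since $t\mapsto\gamma_v(\eta t)$ solves the same ODE with initial velocity $\eta v$.

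Next I will apply Taylor's theorem with remainder to $\gamma$ at $t=0$:
\[
\gamma(\eta)=\gamma(0)+\eta\,\dot\gamma(0)+\tfrac{\eta^2}{2}\,\ddot\gamma(\xi)
\]
for some $\xi\in[0,\eta]$, applied componentwise. Substituting $\gamma(0)=\phi$ and $\dot\gamma(0)=v$ gives
\[
\Exp_\phi(-\eta F_\phi^{-1}\nabla_\phi\mathcal J)=\phi-\eta F_\phi^{-1}\nabla_\phi\mathcal J+R(\eta),
\]
where $R(\eta)$ is the quadratic remainder. The same argument applies to any retraction: $\mathcal R_\phi(\eta v)=\phi+\eta\,\mathrm D\mathcal R_\phi(0)[v]+O(\eta^2)=\phi+\eta v+O(\eta^2)$, using $\mathcal R_\phi(0)=\phi$ and $\mathrm D\mathcal R_\phi(0)=\mathrm{Id}$. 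The identification $v=-\mathrm{grad}\,\mathcal J$ is exactly \Cref{prop:riem-grad}.

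The main obstacle is making the remainder uniformly $O(\eta^2)$, i.e.\ bounding $\ddot\gamma$ on $[0,\eta]$. I will use the ODE itself: $|\ddot\gamma|\le\sup|\Gamma|\,|\dot\gamma|^2$. I will then bound $|\dot\gamma|$ on a compact neighbourhood $K$ of $\phi$ by conservation of speed, since $\dot\gamma^\top F_{\gamma}\dot\gamma=v^\top F_\phi v$. This gives $|\dot\gamma|^2\le \lambda_{\min}(K)^{-1}\,v^\top F_\phi v$, where $\lambda_{\min}(K)>0$ is the smallest eigenvalue of $F$ over $K$. To keep $\gamma([0,\eta])\subset K$ for small $\eta$, I will rely on continuity of $\gamma$ at $t=0$. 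The resulting constant is $\tfrac12\sup_K|\Gamma|\,\lambda_{\min}(K)^{-1}\,\|v\|_{g_\phi}^2$, which is finite. This yields $|R(\eta)|\le C\eta^2$ and proves the corollary.
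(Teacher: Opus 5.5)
Your proof is correct and follows the paper's route. The paper simply notes that $\Exp_\phi$ is a retraction ($\Exp_\phi(0)=\phi$, $\mathrm{D}\Exp_\phi(0)=\mathrm{Id}$), so a first-order Taylor expansion in $\eta$ gives $\phi+\eta v+O(\eta^2)$, and then identifies $v=-F_\phi^{-1}\nabla_\phi\mathcal J$ with $-\mathrm{grad}\,\mathcal J$ via \Cref{prop:riem-grad}. Your geodesic-ODE bound on $\ddot\gamma$ (speed conservation plus a lower eigenvalue bound on $F$ over a compact neighbourhood) just makes explicit the $O(\eta^2)$ constant the paper leaves implicit, and both your general-retraction remark and the paper's tacitly need $\mathcal R_\phi$ to be $C^2$ for the remainder to be $O(\eta^2)$ rather than $o(\eta)$.
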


\textbf{Natural Gradient on the Decoder.}
For the decoder, we use the natural gradient on $\{p_\theta(y\mid z)\}_\theta$:
\begin{equation}
	\begin{aligned}
	\theta_{t+1}
	\;=\;
	\theta_t - \eta_\theta\,F_\theta^{-1}\,\nabla_\theta \E_{x,y,z}[-\log p_\theta(y\mid z)], 
	\label{eq:decoder-natgrad}
\end{aligned}
\end{equation}
where $ F_\theta = \E_{q_{\phi}(z)}\E_{p_\theta(y\mid z)} \!\big[\nabla_\theta\log p_\theta(y\mid z)\nabla_\theta\log p_\theta(y\mid z)^\top\big]$. 
In practice, we compute $F_\theta^{-1}g$ using scalable approximations and solvers, such as K-FAC \citep{martens2015optimizing,martens2018kronecker}.
As the page limitation, we present the whole GeoIB algorithm in \Cref{alg_of_gib}.

\begin{table*}[t]
	\caption{General Evaluation Results on image datasets, MNIST and CIFAR10, and CelebA. Results in bold are the best; those in italics are the second best. \vspace{-4mm}}
	\label{tab_total}
	\resizebox{0.99\linewidth}{!}{
		\setlength\tabcolsep{6.pt}
		\begin{tabular}{ccccc cccc cccc}
			\toprule
			\multirow{2}{*} {Methods} & \multicolumn{4}{c} {MNIST, $\beta=0.0001$, $K =128$}& \multicolumn{4}{c} {CIFAR10, $\beta=0.0001$, $K=128$} & \multicolumn{4}{c} {CelebA, $\beta=0.0001$, $K=128$} \\
			\cmidrule(r){2-5}   \cmidrule(r){6-9}  \cmidrule(r){10-13}
			& Accuracy  &$I(X;Z)$   & MSE	& RT  & Accuracy    &$I(X;Z)$   & MSE  & RT & Accuracy  &$I(X;Z)$   & MSE & RT	   \\
			\midrule 
			VIB \citep{alemi2016deep}   & 98.72\%   & 1.81       & 0.034   & \textbf{199.84}           & \textit{82.65\% } & 0.87 & \textit{0.887 }   & \textbf{277.27}    & 95.85\% &0.55		&0.054  & \textbf{264.18} \\
			SIB  \citep{yang2025structured}  & \textit{99.08\%}   & 1.86         & 0.037  & 399.31      & 75.81\% & \textbf{0.63} &0.051  & 524.49     & \textbf{97.25\%} & \textbf{0.38} & \textbf{0.072}	& 456.92  \\
			MINE  \citep{belghazi2018mutual} & 98.85\% & 1.76     & 0.032      & \textit{234.04}     & 82.64\% & 0.88 & 0.806    & \textit{362.07}    & 96.29\%	&0.51 &	0.048  & \textit{314.26}  \\
			AIB \citep{zhai2022adversarial} & 99.01\%	   & \textit{1.69}  &  \textit{0.043}   & 293.56    & 82.63\% &\textit{0.84} &0.812 & 509.56   & 96.05\%&	0.52 &0.053  & 454.80 \\
			GeoIB (Our)       	 & \textbf{99.28\%}	& \textbf{1.69}  &  \textbf{0.043}  & 442.32 & \textbf{85.54\%} &1.01 &\textbf{0.899 }   & 556.08   & \textit{97.01}\%& \textit{0.47} &	\textit{0.066}   & 464.91  \\
			\bottomrule[1pt]
	\end{tabular}}
\end{table*}

\section{Experiments} \label{experiment_Setting}
 
  In this section, we conduct experiments to answer the following research questions (RQ) about GeoIB:
 \begin{itemize}[itemsep=0pt, parsep=0pt, leftmargin=*]
 	\item \textbf{RQ1}: How does the proposed GeoIB perform on information compression and prediction accuracy, as compared with the state-of-the-art IB solutions? (See \Cref{overall_eva,info_plane})
 	\item \textbf{RQ2}: How do different hyperparameters, such as the Lagrange multiplier $\beta$ and representation dimensionality $K$, influence the GeoIB? (See \Cref{eva_beta,eva_k})
 \end{itemize}
 

\subsection{Experimental Settings}

\noindent
\textbf{Datasets.}
We have conducted experiments on three widely adopted public datasets: MNIST \citep{deng2012mnist}, CIFAR10 \citep{krizhevsky2009learning}, and CelebA \citep{liu2018large}, offering a range of objective categories with varying levels of learning complexity. We present detailed statistics of all datasets and how do we use them in \Cref{datasets_appendix}.

\noindent
\textbf{Models.}
We select three model architectures of different sizes in our experiments: a 7-layer convolutional neural network (CNN), a 5-layer multi-layer perceptron (MLP), and ResNet18. For the MNIST dataset, we employ two MLPs to form the GeoIB model (one as the compression encoder and one as the task decoder). For CIFAR10 and CelebA, we employ the ResNet18 as the encoder and one MLP  and CNN as decoder for CIFAR10 and CelebA respectively.

\noindent
\textbf{Metrics.}
Following the representative IB studies \citep{alemi2016deep,yang2025structured,zhai2022adversarial}, we quantify model utility by top-1 classification \textbf{Accuracy} on the held-out test set. Information compression is assessed by the mutual information  \textbf{ \textit{I(Z;X)} } between the learned representation and the input, estimated with MINE \citep{belghazi2018mutual}. To probe the leakage contained in $Z$, we perform two standard representation-level attacks: (i) a model inversion attack that reconstructs $x$ from $z$ \citep{fredrikson2015model}, evaluated by mean squared error ( \textbf{MSE}; lower MSE indicates stronger leakage); and (ii) a membership inference attack \citep{shokri2017membership}, evaluated by the membership inference accuracy (\textbf{MIA}) (higher values indicate stronger leakage). We also evaluate the efficiency of methods by recording their running time (\textbf{RT}).

\noindent
\textbf{Compared IB Benchmarks.}
We compare GeoIB against four representative Information Bottleneck (IB) variants: 
(1) the standard Variational IB (\textbf{VIB}) \citep{alemi2016deep}; 
(2) an IB variant where the mutual information term is estimated with \textbf{MINE} \citep{belghazi2018mutual}; 
(3) the state-of-the-art Structured IB (\textbf{SIB}) focusing on structure-aware feature learning \citep{yang2025structured}; and 
(4) the Adversarial IB (\textbf{AIB}) that incorporates adversarial regularization into the bottleneck \citep{zhai2022adversarial}. 
For fairness, all methods use the same backbone, data preprocessing, and training schedule; hyperparameters are tuned on the validation set following the original papers where applicable.

\subsection{Overall Evaluation of GeoIB} \label{overall_eva}

\noindent\textbf{Setup.}
We compare GeoIB with four representative IB variants on MNIST, CIFAR10, and CelebA under the same backbone and schedule. We fix the Lagrange multiplier $\beta=10^{-4}$ and the representation dimensionality $K=128$ for an apples-to-apples comparison.
We report top-1 accuracy (higher is better), the estimated mutual information $I(X;Z)$ via MINE (lower $I(X;Z)$ value indicates stronger compression), and model-inversion MSE from reconstructing $x$ from $z$ (higher indicates less leakage) in \Cref{tab_total}.

\noindent\textbf{Results.}
Across the three datasets, GeoIB attains the best or second-best results on all metrics. On MNIST, GeoIB achieves the highest accuracy and a (tied) lowest $I(X;Z)$, while matching the top inversion MSE. This suggests strong compression without sacrificing utility. On CIFAR10, GeoIB delivers the best accuracy and the highest inversion MSE (least leakage). Although SIB attains the lowest $I(X;Z)$, GeoIB offers a better accuracy–privacy trade-off overall. On CelebA, SIB slightly leads in accuracy and MI. GeoIB ranks second with competitive accuracy and privacy (indicated by MSE), confirming robustness on a more fine-grained, structured dataset. 

From the efficiency perspective, VIB achieves the best efficiency performance among all datasets and IB with MINE is the second best. They ignore the geometric information for mutual information estimation, enabling efficient training. GeoIB always consumes more running time, but it is similar to the RT of SIB, which considers the structure information.

\begin{figure*}[t] 
	\centering
	\subfloat{ 	\label{fig:mnistmodelaccixz}
		\includegraphics[scale=0.34]{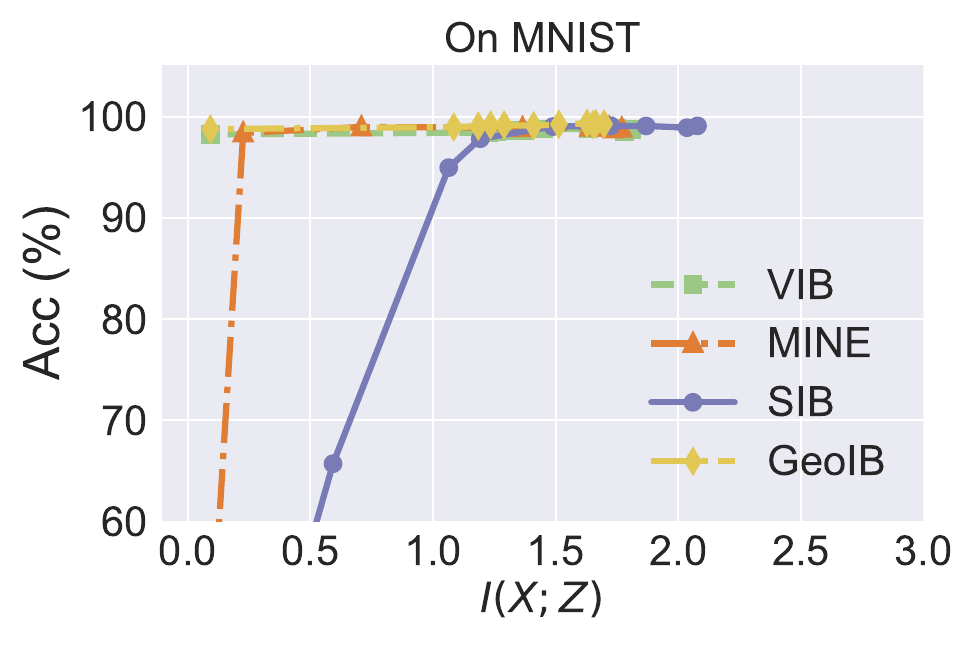}
	}			
	\subfloat{ 		\label{fig:cifar10modelaccixz}
		\includegraphics[scale=0.34]{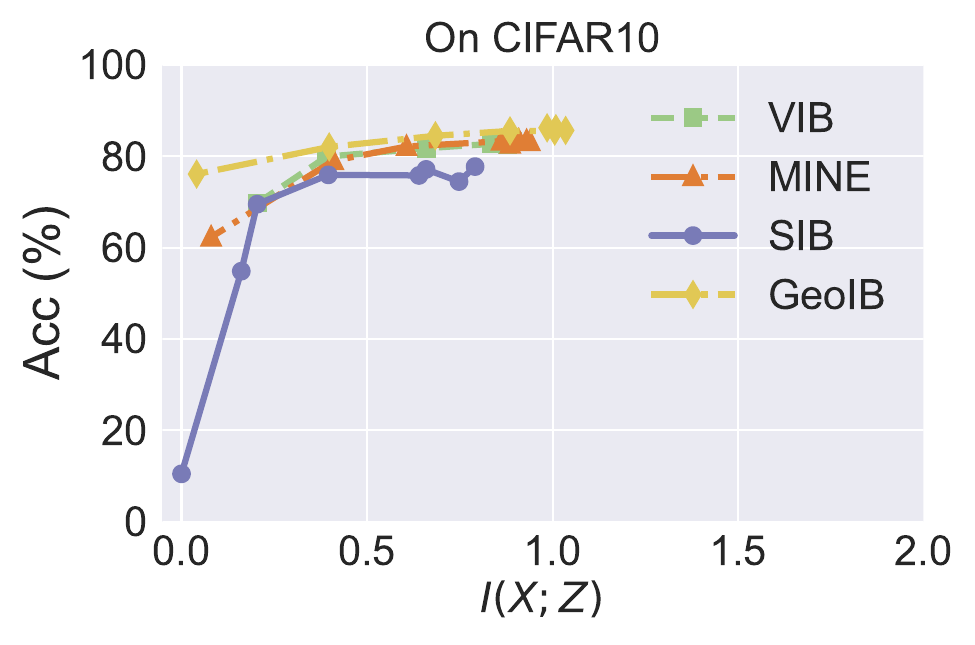}
	}
	\subfloat{  	\label{fig:celebamodelaccixz}
		\includegraphics[scale=0.34]{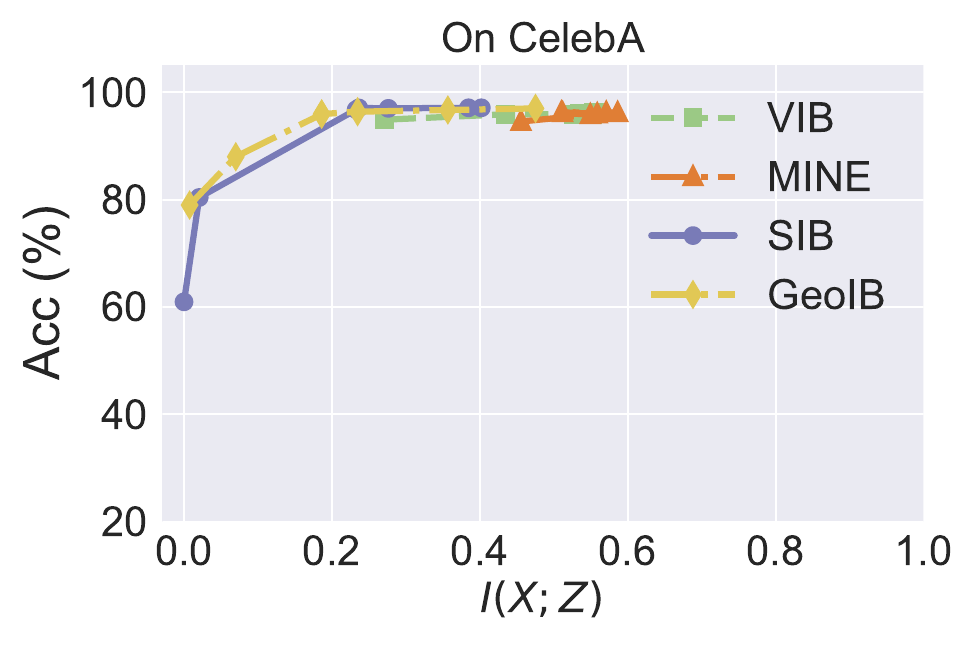}
	}  
	\vspace{-4mm}
	\caption{Evaluation of compression ratio and prediction accuracy from the information plane. } 
	\label{evaluation_of_general_acc_mi} 
	\vspace{-4mm}
\end{figure*}

\begin{figure*}[t] 
	\centering
	\subfloat{ 	 	\label{fig:mnistmodelaccbeta} \rotatebox{90}{ \hspace{10mm}	\small{ On MNIST} }
		\includegraphics[scale=0.3]{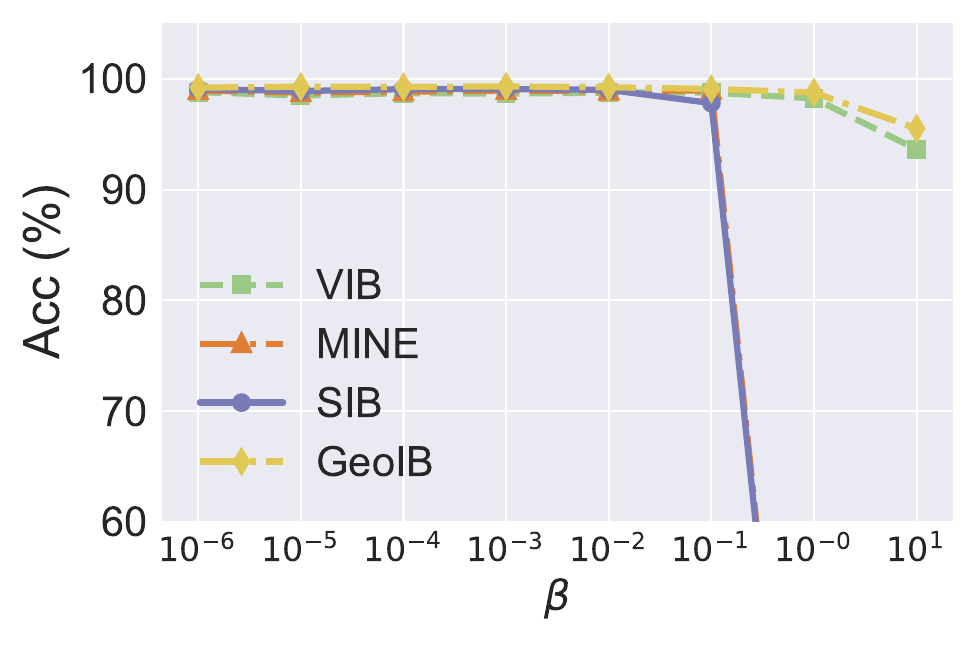}
	}			
	 	\hspace{3mm}
	\subfloat{ 		 	\label{fig:mnistmodelixzbeta}
		\includegraphics[scale=0.3]{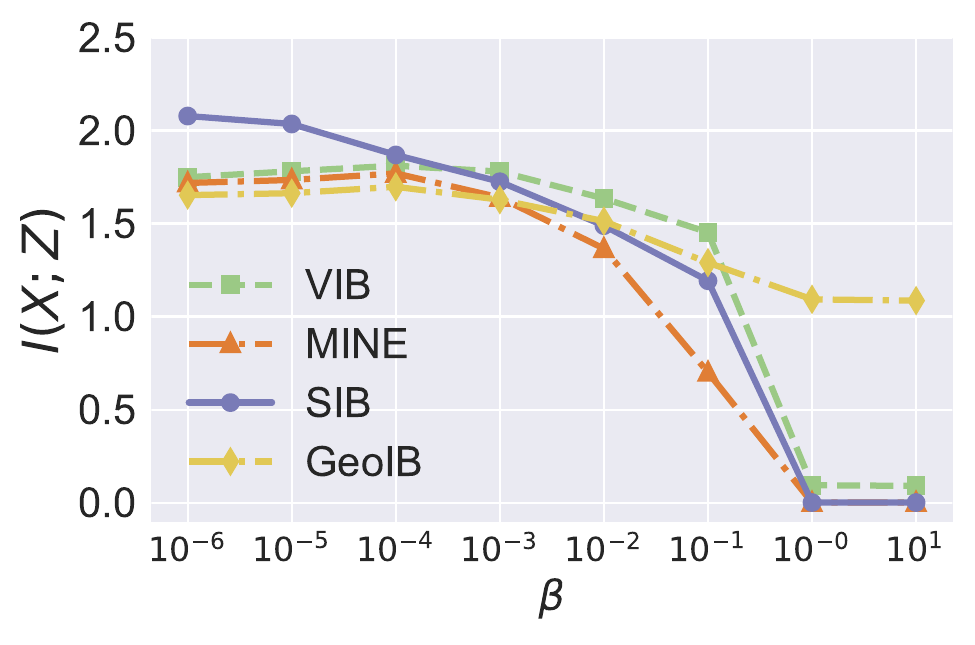}
	}
	 	\hspace{3mm}
	\subfloat{  	 	\label{fig:mnistmodelmsebeta}
		\includegraphics[scale=0.3]{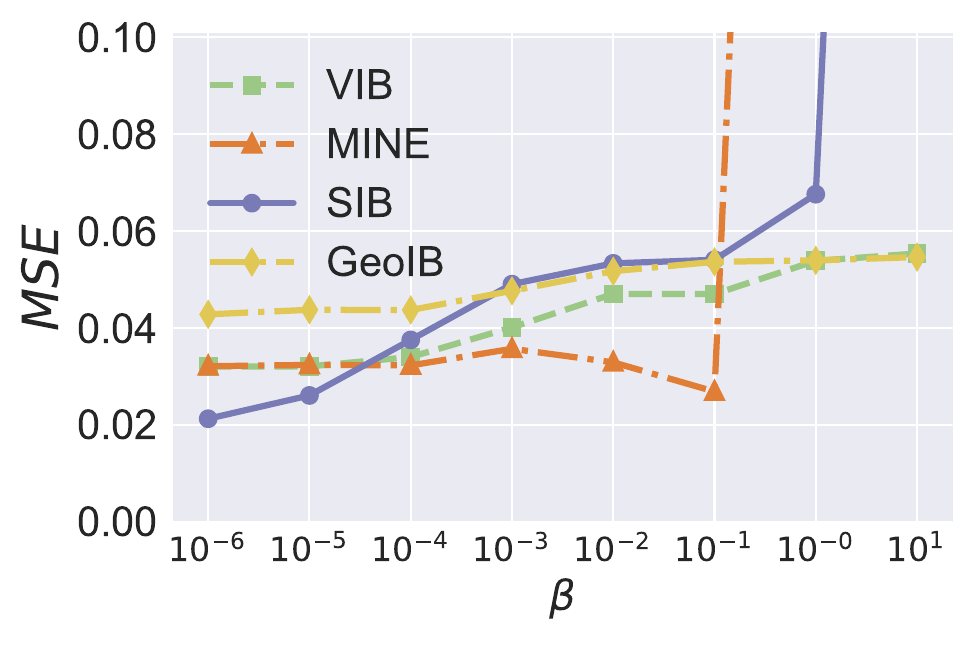}
	}  \\
	\vspace{-4mm}
	\subfloat{ 	\label{fig:cifar10modelaccbeta} \rotatebox{90}{ \hspace{10mm}	\small{ On CIFAR10} }
		\includegraphics[scale=0.3]{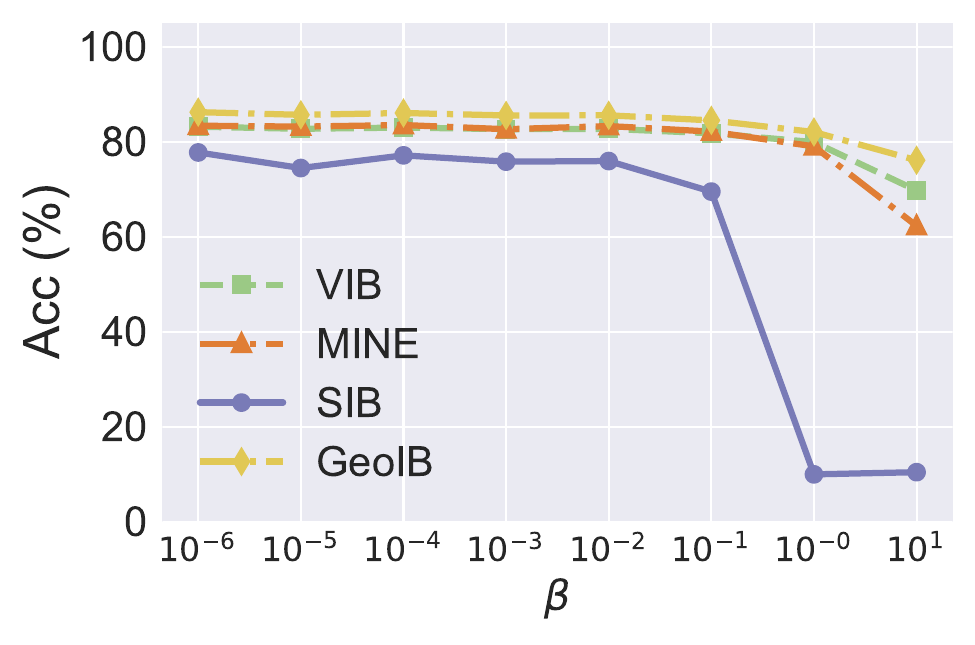}
	}			
	 	\hspace{3mm}
	\subfloat{ 			\label{fig:cifar10modelixzbeta}
		\includegraphics[scale=0.3]{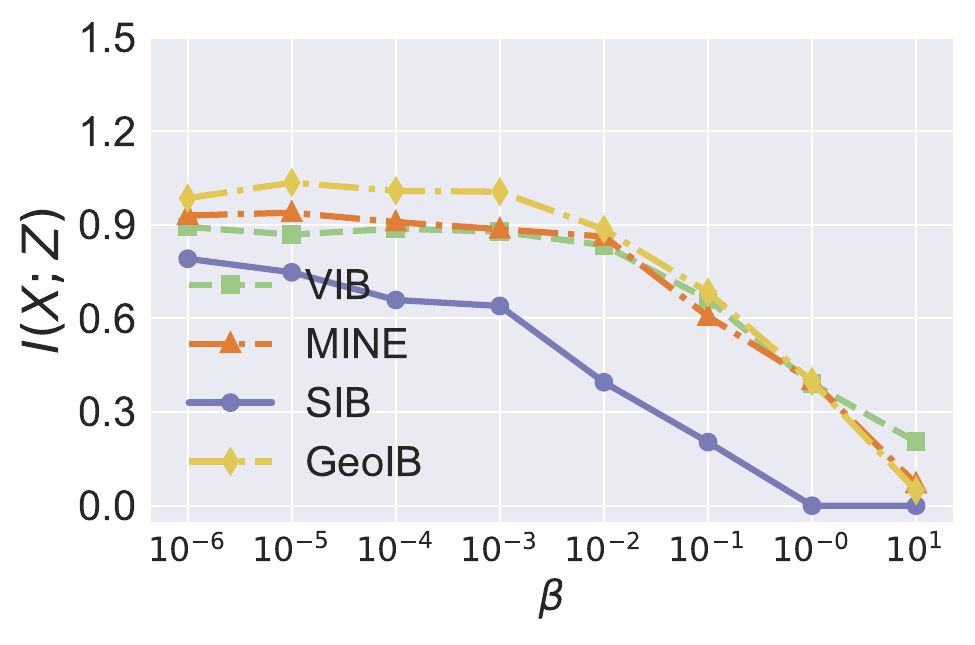}
	}
		\hspace{3mm}
	\subfloat{  	 	\label{fig:cifar10modelmsebeta}
		\includegraphics[scale=0.3]{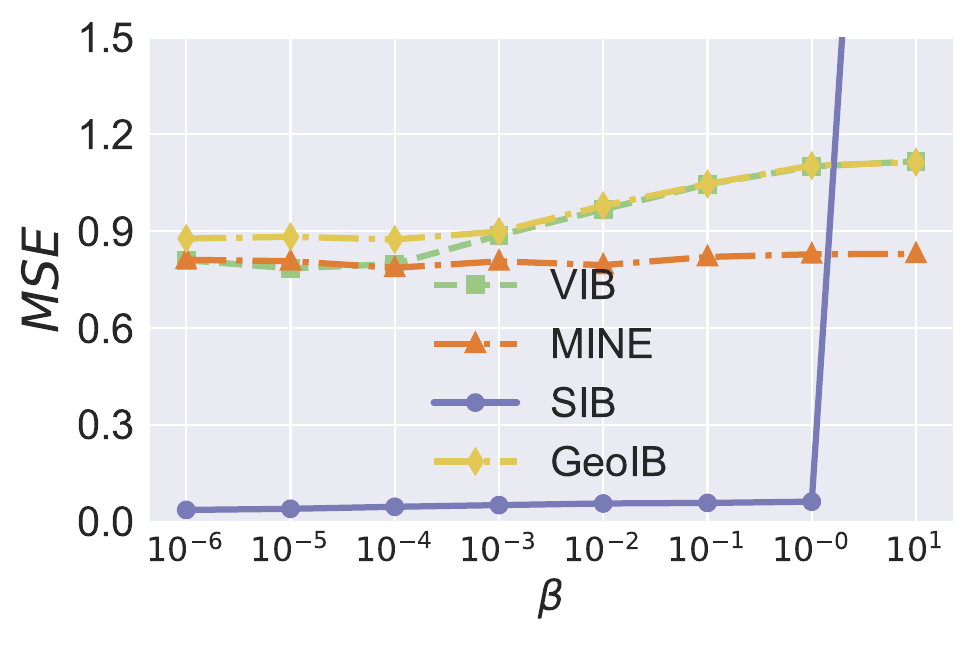}
	}  \\
	\vspace{-4mm}
	\subfloat{ 	\label{fig:celebamodelaccbeta} \rotatebox{90}{ \hspace{10mm}	\small{ On CelebA} }
		\includegraphics[scale=0.3]{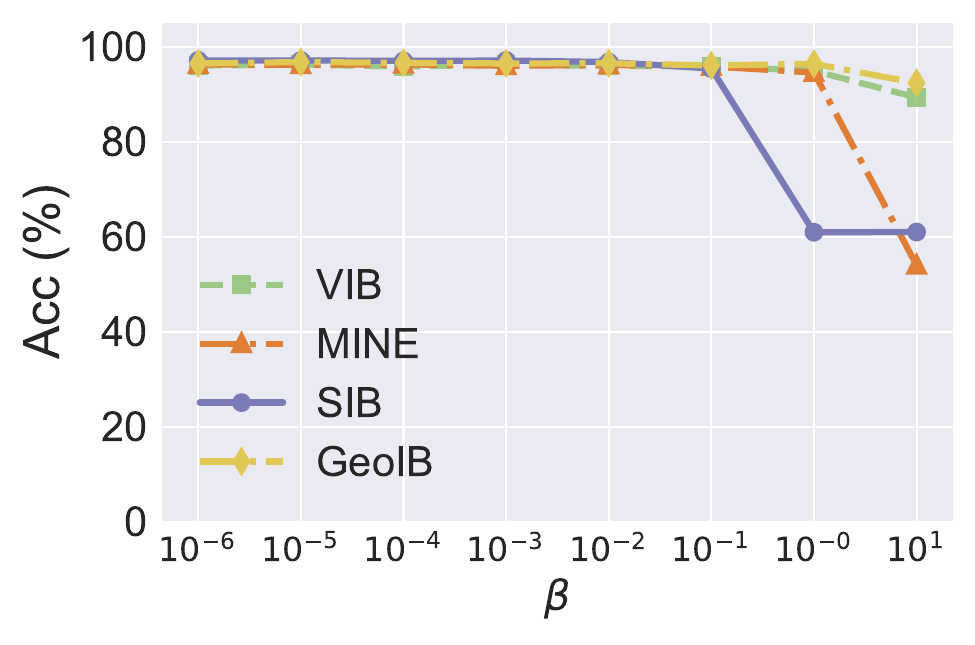}
	}			
		\hspace{3mm}
	\subfloat{ 			\label{fig:celebamodelixzbeta}
		\includegraphics[scale=0.3]{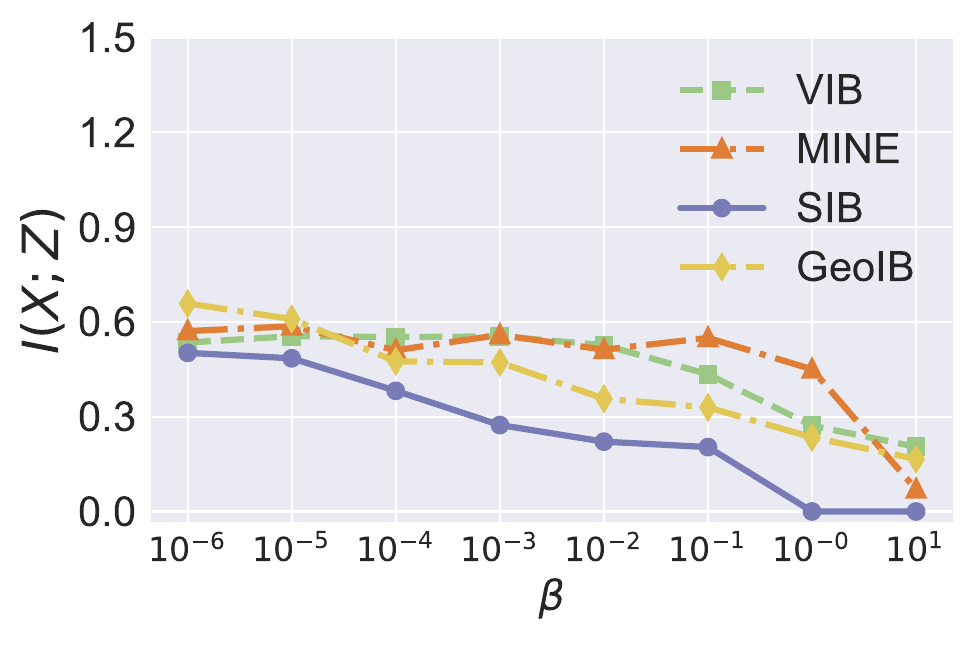}
	}
		\hspace{3mm}
	\subfloat{  		\label{fig:celebamodelmsebeta}
		\includegraphics[scale=0.3]{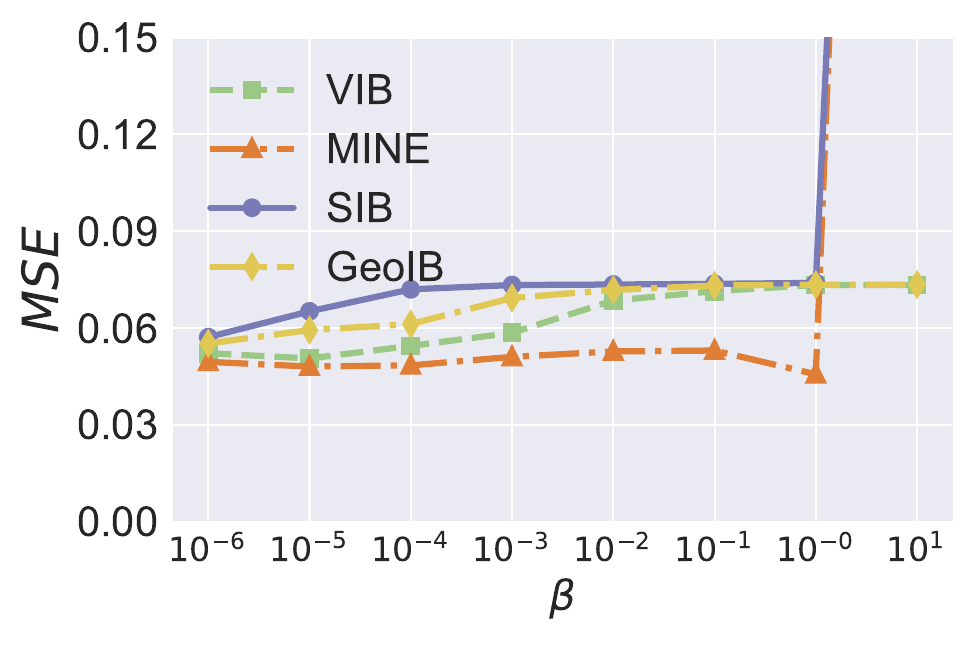}
	}  
	\vspace{-4mm}
	\caption{Evaluation about the impact of the Bottleneck multiplier $\beta$. } 
	\label{evaluation_of_general_beta} 
		\vspace{-4mm}
\end{figure*}

\begin{figure*}[t] 
	\centering
	\subfloat[\scriptsize $\beta=10^{-4}$, Accuracy $=99.28\%$ ]{  	\label{fig:repdemomnist00001}  \rotatebox{90}{ \hspace{10mm}	{On MNIST} }
		\includegraphics[scale=0.34]{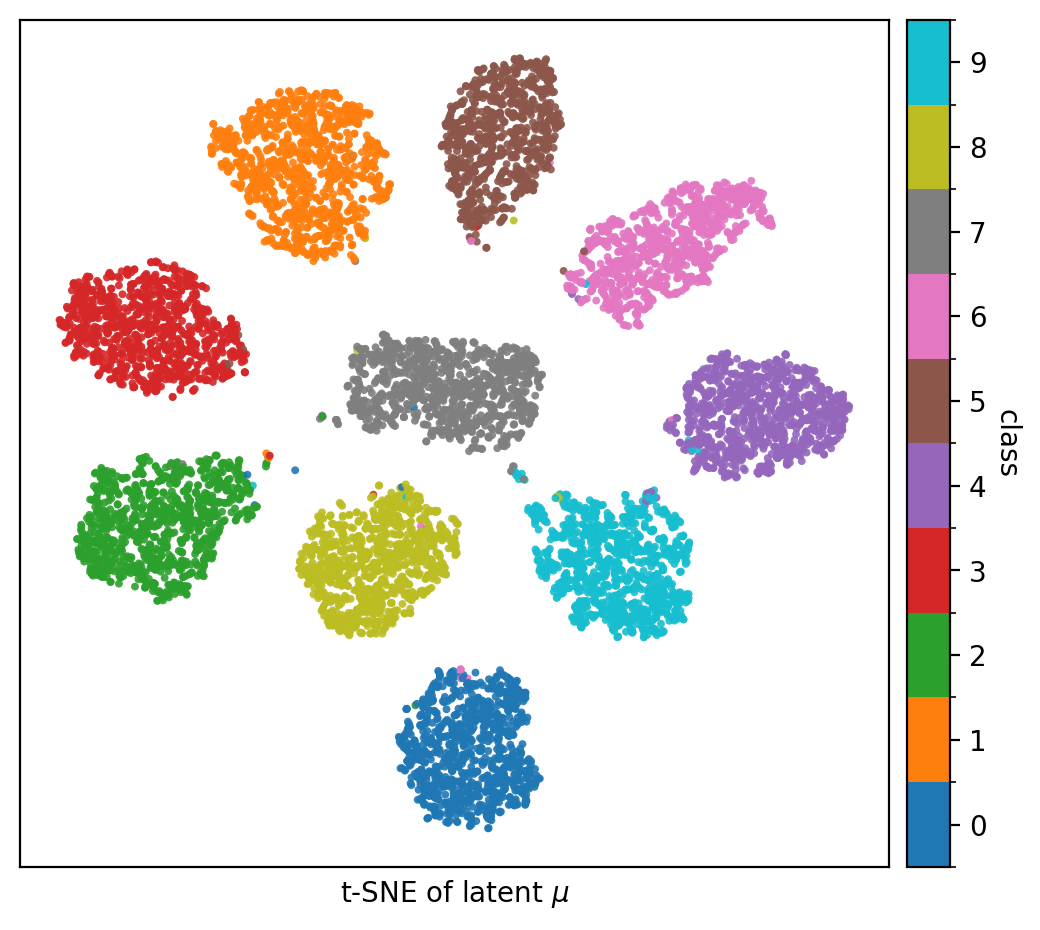}
	}			
		\hspace{3mm}
	\subfloat[\scriptsize $\beta=10^{0}$, Accuracy $=98.77\%$]{ 		\label{fig:repdemomnist1}
		\includegraphics[scale=0.34]{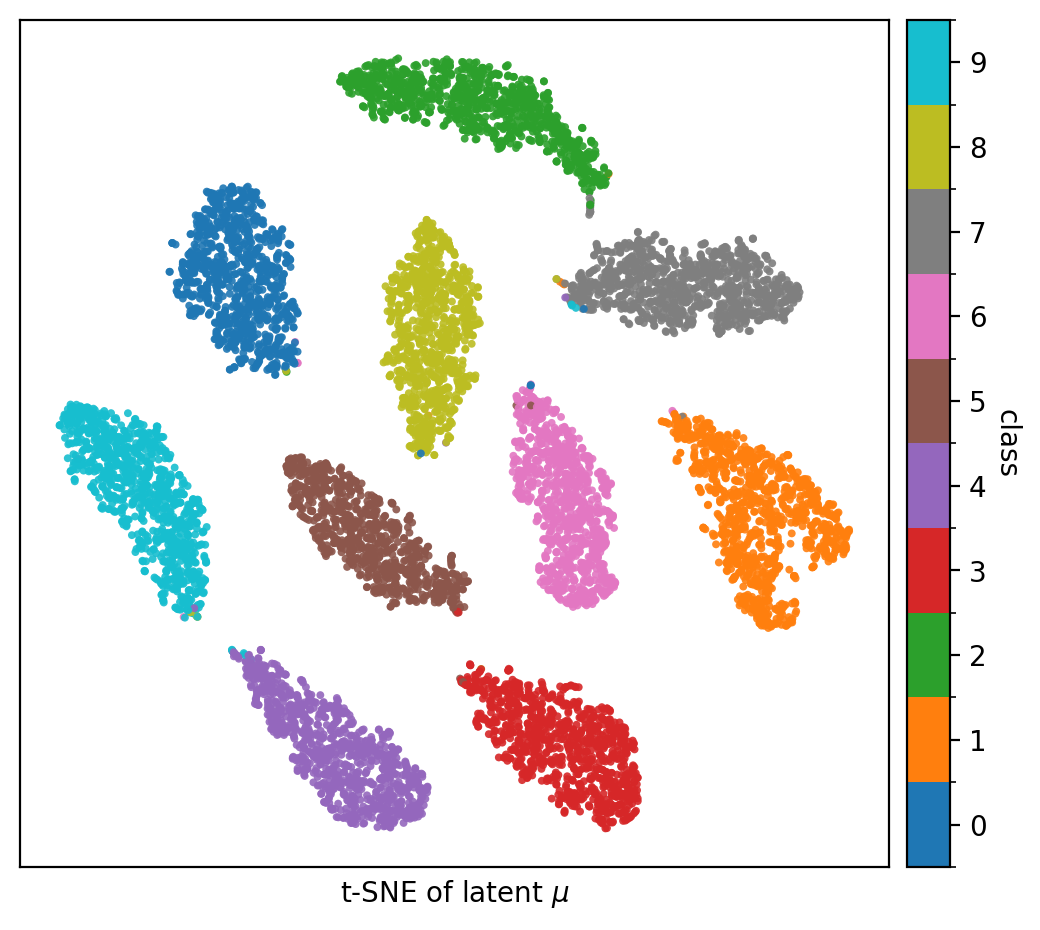}
	}
		\hspace{3mm}
	\subfloat[\scriptsize $\beta=10^{1}$, Accuracy $=95.53\%$]{  	\label{fig:repdemomnist10}
		\includegraphics[scale=0.34]{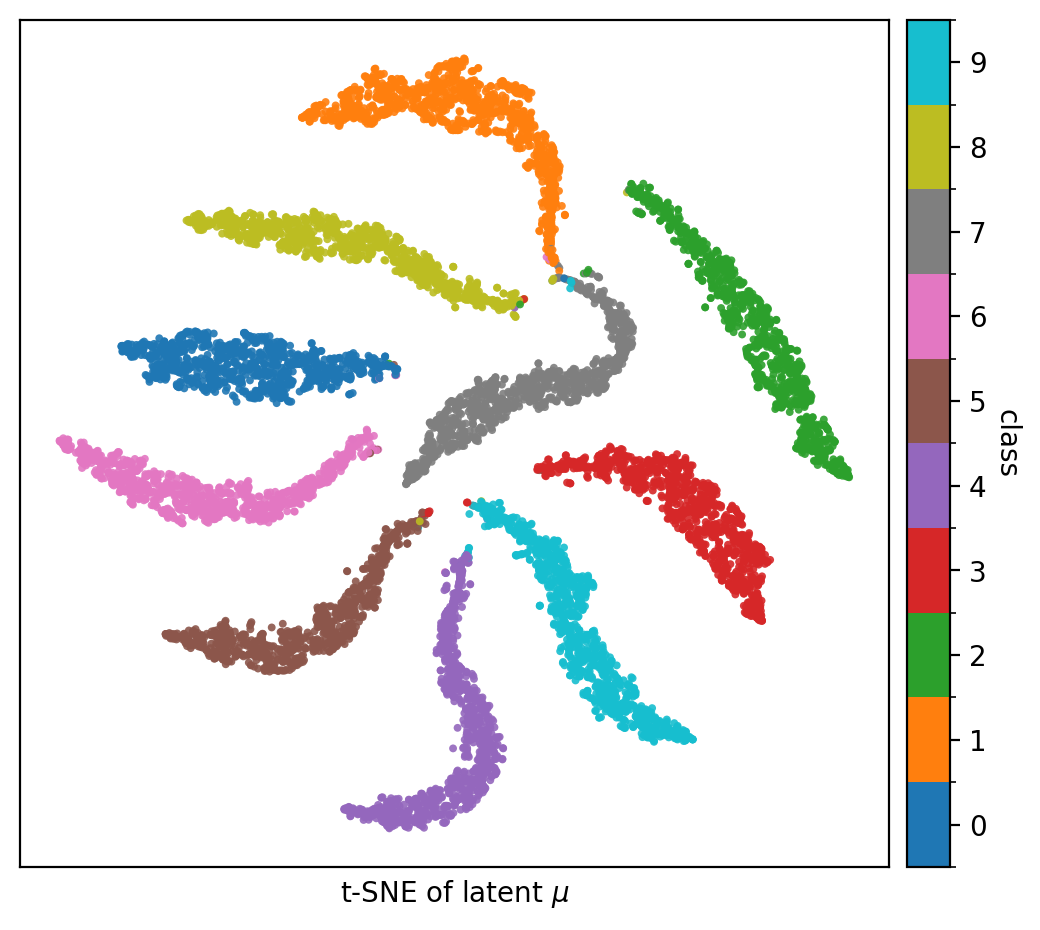}
	}  
	\vspace{-4mm}
	\caption{Visualizing representation embeddings of posterior means $\mu(x)$ for 10{,}000 test images in two dimensions on MNIST ($K=128$). Colors denote true labels. From left to right: $\beta=10^{-4}$, $10^{0}$, and $10^{1}$; the corresponding test accuracies are shown below each panel. As $\beta$ increases, within-class dispersion shrinks and clusters move toward class-wise prototypes, indicating stronger compression; accuracy decreases accordingly.
	} 
	\label{compressive_on_mnist} 
\end{figure*}

\begin{figure*}[t] 
	\centering
	\vspace{-6mm}
	\subfloat{ 	 	\label{fig:mnistmodelacck}  \rotatebox{90}{ \hspace{8mm}	\small{ On MNIST} }
		\includegraphics[scale=0.3]{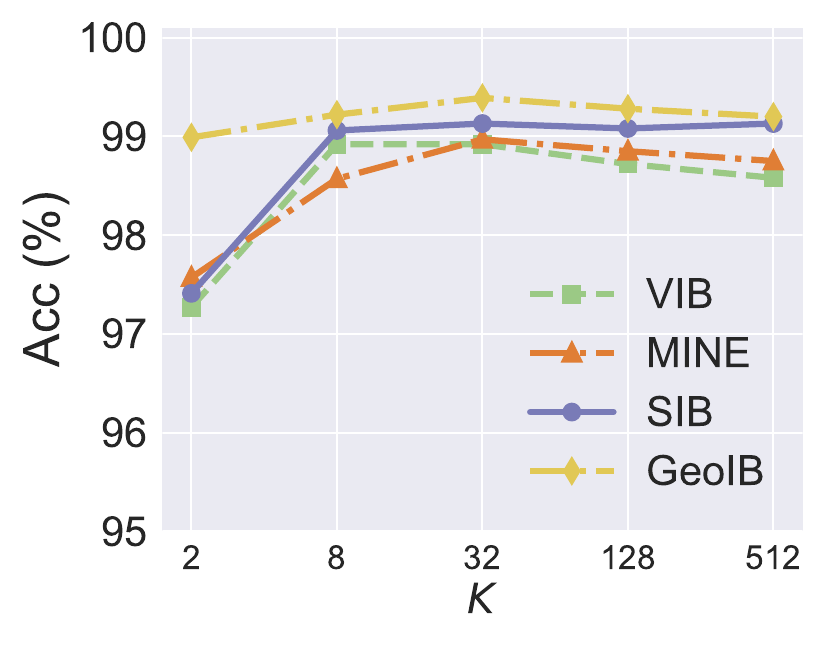}
	}			
	\subfloat{ 		 	\label{fig:mnistmodelixzk}
		\includegraphics[scale=0.3]{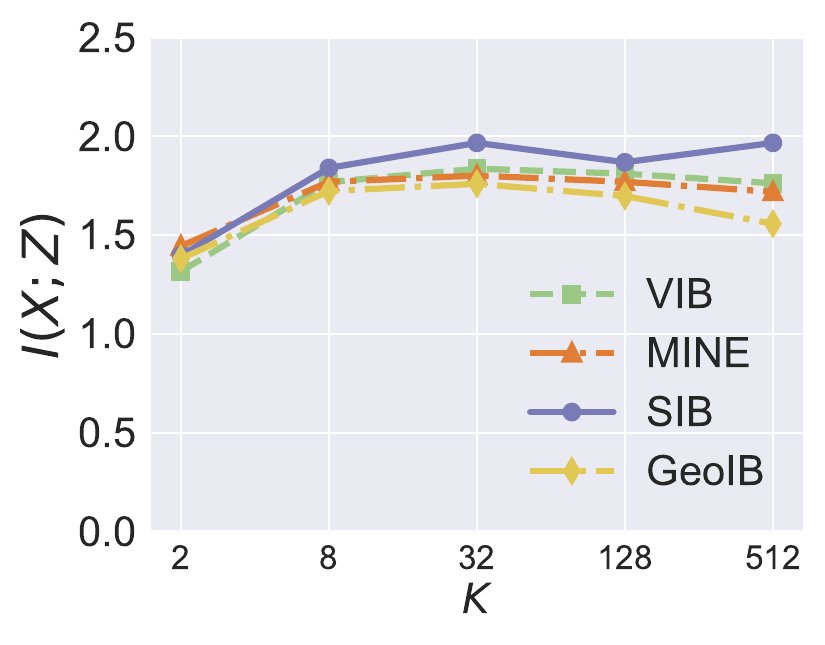}
	}
	\subfloat{  		\label{fig:mnistmodelmsek}
		\includegraphics[scale=0.3]{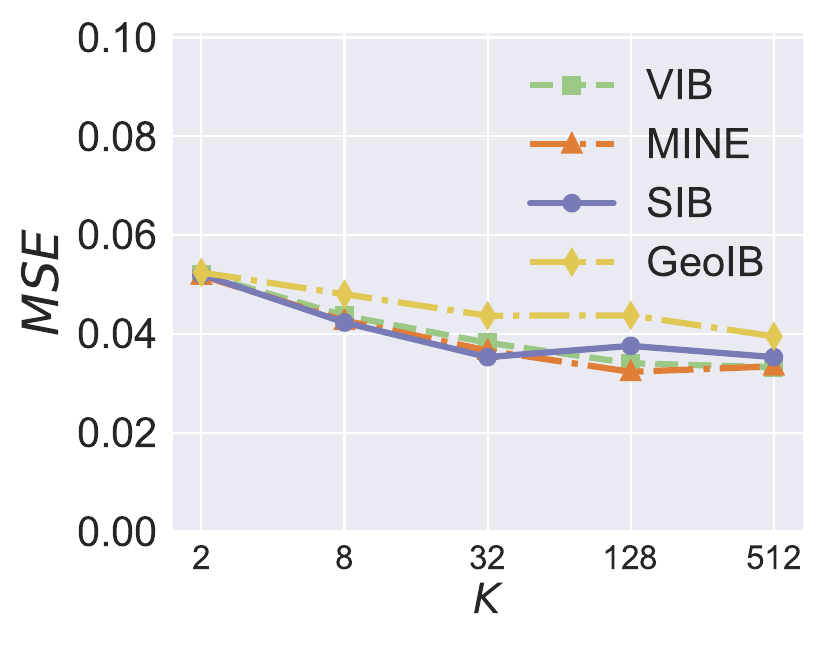}
	} 
	\subfloat{  	 	\label{fig:mnistmodelmiak}
		\includegraphics[scale=0.3]{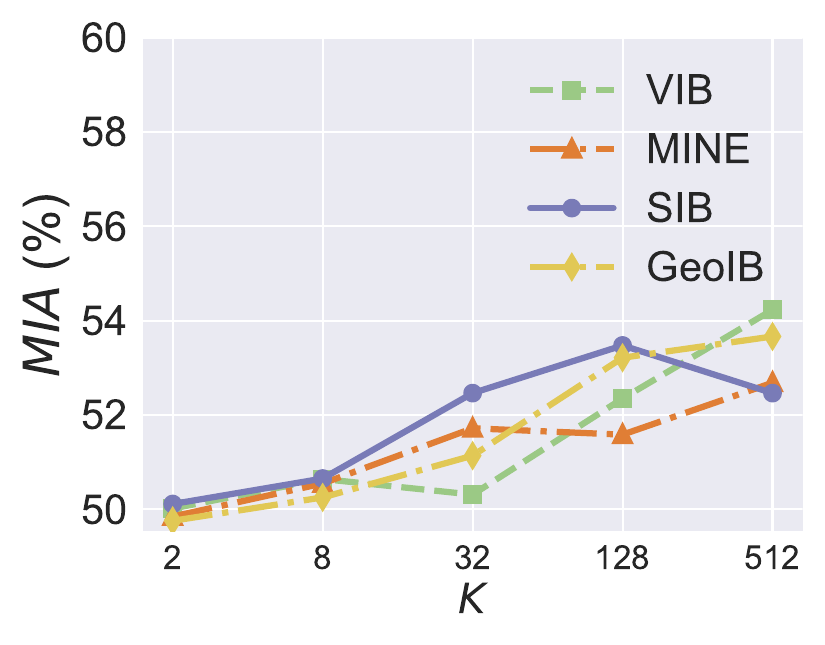}
	}
	\\
	\vspace{-4mm}
	\subfloat{ 	 	\label{fig:cifar10modelacck}  \rotatebox{90}{ \hspace{8mm}	\small{ On CIFAR10} }
		\includegraphics[scale=0.3]{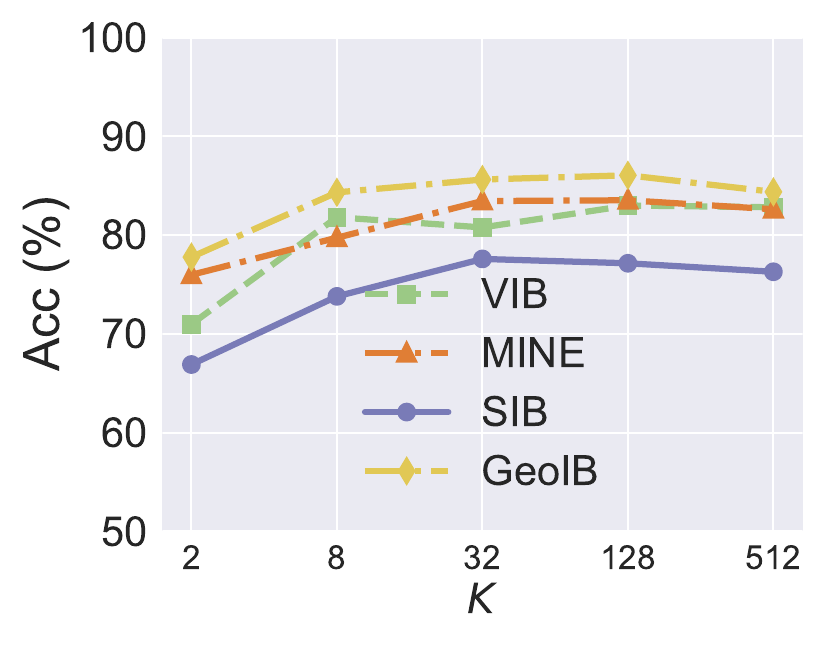}
	}			
	\subfloat{ 		 	\label{fig:cifar10modelixzk}
		\includegraphics[scale=0.3]{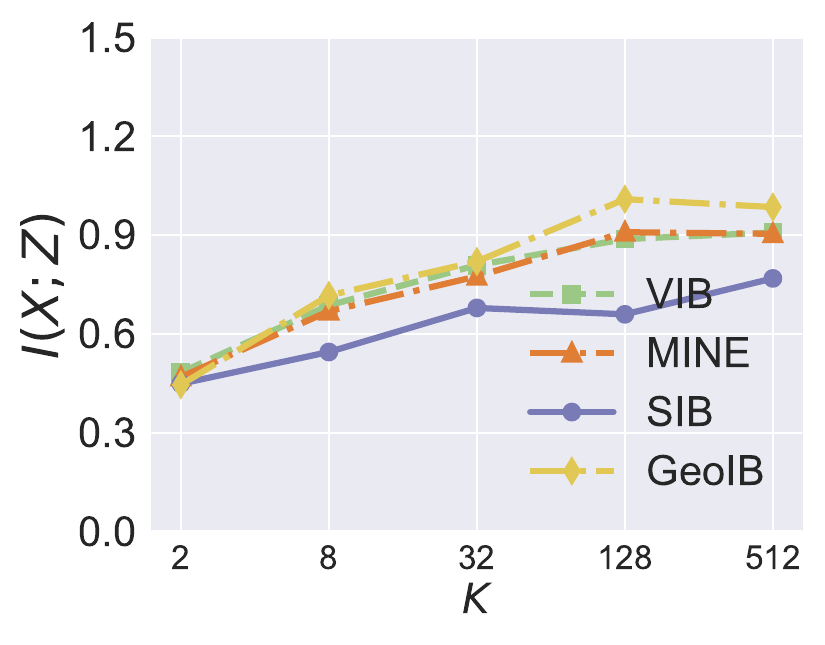}
	}
	\subfloat{  	 		\label{fig:cifar10modelmsek}
		\includegraphics[scale=0.3]{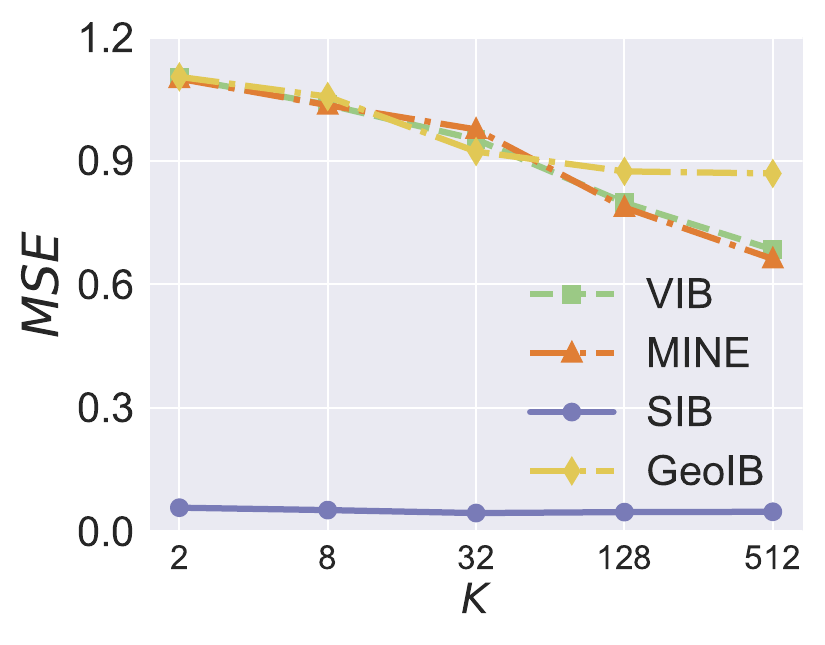}
	} 
	\subfloat{  	 	\label{fig:cifar10modelmiak}
		\includegraphics[scale=0.3]{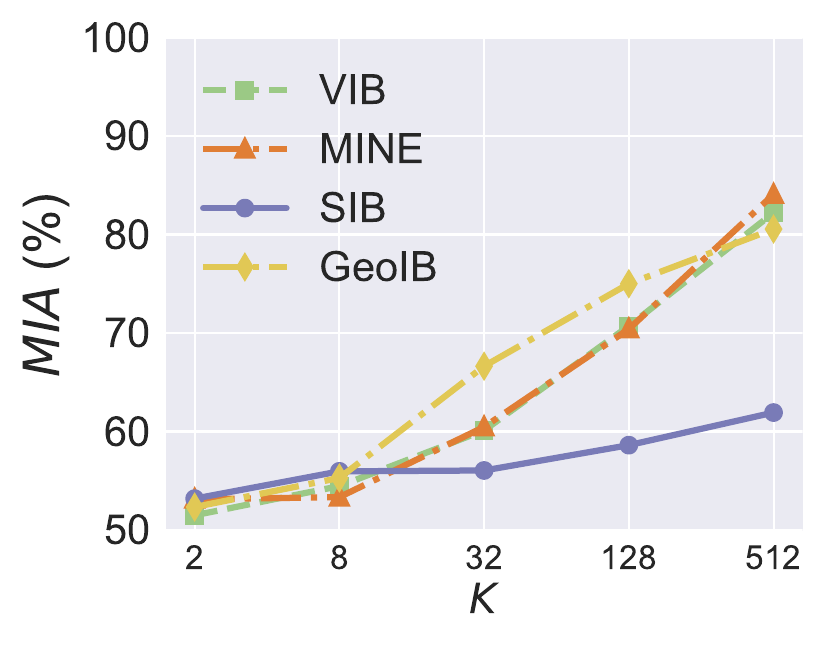}
	}  
	\vspace{-4mm}
	\caption{Evaluation about the impact of the Representation Dimensionality $K$.} 
	\label{evaluation_of_general_k} 
		\vspace{-4mm}
\end{figure*}

\subsection{Information-Plane Evaluation: Compression vs.\ Accuracy} \label{info_plane}

\noindent\textbf{Setup.}
We trace each method’s information plane on MNIST, CIFAR10, and CelebA by randomly selecting the Lagrange multiplier $\beta$ from $10^{-6}$ to $10^{1}$. During this experiment, we fixed representation size $K\!=\!128$. For every~$\beta$, we train to convergence and record test accuracy (higher is better) and the estimated mutual information $I(X;Z)$ via MINE as shown in \Cref{evaluation_of_general_acc_mi}.
 
\noindent
\textbf{Results.} In \Cref{evaluation_of_general_acc_mi}, across datasets, GeoIB's curve is consistently shifted up and left relative to the baselines, achieving higher accuracy at a matched $I(X;Z)$, or a smaller $I(X;Z)$ at a matched accuracy (Pareto improvement). When compression is weak, i.e., large $I(X;Z)$, all methods reach a high-accuracy plateau.
As compression strengthens, i.e., smaller $I(X;Z)$, VIB, MINE, and SIB exhibit clear accuracy drops, while GeoIB maintains accuracy over a wider low-$I(X;Z)$ range before degrading.
On CIFAR10 the gap is most visible, where GeoIB preserves accuracy at lower $I(X;Z)$, And on CelebA, the curves cluster near the top but GeoIB attains comparable accuracy with less information in $Z$.


\subsection{Ablation: Influence of the Bottleneck multiplier $\beta$} \label{eva_beta}

\noindent\textbf{Setup.}
We study the impact of the Bottleneck multiplier by sweeping $\beta$ on a logarithmic grid from $10^{-6}$ to $10^{1}$, with representation size fixed at $K=128$. 
For each $\beta$, we train a model and report test accuracy (higher is better), the estimated mutual information $I(X;Z)$ (lower means stronger compression), and the model-inversion MSE from reconstructing $x$ from $z$ (higher means less leakage). Results for MNIST, CIFAR10, and CelebA are shown in \Cref{evaluation_of_general_beta}.

\noindent\textbf{Results.}
(Left column) \emph{Accuracy vs.\ $\beta$.} GeoIB maintains the highest or near-highest accuracy over a wide range of $\beta$ on all datasets. 
Baselines, especially MINE and SIB, exhibit sharp degradation when $\beta\!\ge\!10^{-1}$; on CIFAR10 and CelebA, SIB collapses at large $\beta$.

(Middle) \emph{$I(X;Z)$ vs.\ $\beta$.} All methods show the expected monotonic decrease of $I(X;Z)$ as $\beta$ increases. 
SIB achieves the smallest $I(X;Z)$ (strongest compression) but at the cost of the pronounced accuracy drop above, whereas GeoIB attains competitive compression while preserving accuracy.

(Right) \emph{MSE vs.\ $\beta$.} GeoIB yields the best or second-best MSE across most $\beta$, indicating stronger resistance to inversion attacks. 
At extreme $\beta$ ($10^{0}\!-\!10^{1}$), the MSE of some baselines spikes due to representation collapse. Reconstructions become effectively random, which inflates MSE but coincides with poor utility.
 

\noindent\textbf{Qualitative visualization.}
To examine how the bottleneck strength shapes the representation, we visualize a 2-D t-SNE of the representation $\mu(x)=\mathbb{E}[ Z \mid x]$ for 10{,}000 MNIST test images at three values of $\beta$ (\Cref{compressive_on_mnist}); the CIFAR10 results are in \Cref{compressive_on_cifar10} in \Cref{addtional_exp}. 
With a small Bottleneck multiplier ($\beta=10^{-4}$), clusters are well separated and exhibit relatively large within-class spread, indicating that $Z$ retains fine-grained input details. At $\beta=10^{0}$, representation clusters contract toward class-wise prototypes while remaining separable.
At $\beta=10^{1}$, representation embeddings concentrate along narrow arcs near the class centers, consistent with stronger compression and the accuracy drop observed in \Cref{evaluation_of_general_beta}.



\subsection{Ablation: Influence of the Representation Dimensionality $K$} \label{eva_k}

\noindent\textbf{Setup.}
We vary the representation dimensionality $K$ from $2^1$ to $2^9$ with the Lagrange multiplier fixed at $\beta=10^{-4}$. 
For each $K$, we report the accuracy, mutual information $I(X;Z)$, MSE, and MIA for different IB methods. Results for MNIST and CIFAR10 are shown in \Cref{evaluation_of_general_k}.

\noindent\textbf{Results.}
In the Left column of \Cref{evaluation_of_general_k}, accuracy improves with $K$ and then saturates (MNIST: gains plateau around $K\!\ge\!32$; CIFAR10: around $K\!\ge\!128$). 
GeoIB attains the highest or near-highest accuracy across $K$, especially in the low-to-mid range. In the Mid-left column, as expected, $I(X;Z)$ increases with $K$, indicating weaker compression when the latent space is wider.  Tail non-monotonicity on MNIST at very large $K$ is minor and likely due to the representation dimensionality of $K=32$ is already large enough for MNIST dataset. The Mid-right column shows that MSE generally decreases as $K$ grows (reconstructions become easier), reflecting increased leakage with higher-dimensional $Z$.  GeoIB keeps MSE competitively high (i.e., more resistant to inversion) at small and medium $K$. In the Right column, MIA success rises with $K$ on both datasets, corroborating that larger $Z$ carries more membership signal. Across a broad range of $K$, GeoIB remains competitive; when $K$ is small-to-medium, it attains a favorable utility–privacy balance.

\section{Summary and Future Work} \label{summary}

We propose the GeoIB to solve the IB problem from a information geometry perspective. The key ingredients are (i) a distribution-level Fisher--Rao (FR) discrepancy that locally matches KL to second order and is invariant under smooth reparameterizations of the latent, and (ii) a geometry-level Jacobian–Frobenius (JF) penalty that provides a local capacity-type upper bound on $I_\phi(Z;X)$ by discouraging pullback volume expansion. 

 
Promising directions include tightening the FR-based proxies beyond the local (second-order) regime and replacing the trace relaxation with sharper spectral approximations; improving the efficiency of GeoIB training; and extending GeoIB to federated and privacy-preserving settings where FR and JF controls might yield verifiable unlearning guarantees.



\begin{acks}
	The authors would like to thank the anonymous reviewers for their valuable comments and review. 
\end{acks}

\bibliographystyle{ACM-Reference-Format}
\bibliography{GIB_ref}


\begin{thebibliography}{41}


\ifx \showCODEN    \undefined \def \showCODEN     #1{\unskip}     \fi
\ifx \showISBNx    \undefined \def \showISBNx     #1{\unskip}     \fi
\ifx \showISBNxiii \undefined \def \showISBNxiii  #1{\unskip}     \fi
\ifx \showISSN     \undefined \def \showISSN      #1{\unskip}     \fi
\ifx \showLCCN     \undefined \def \showLCCN      #1{\unskip}     \fi
\ifx \shownote     \undefined \def \shownote      #1{#1}          \fi
\ifx \showarticletitle \undefined \def \showarticletitle #1{#1}   \fi
\ifx \showURL      \undefined \def \showURL       {\relax}        \fi
\providecommand\bibfield[2]{#2}
\providecommand\bibinfo[2]{#2}
\providecommand\natexlab[1]{#1}
\providecommand\showeprint[2][]{arXiv:#2}

\bibitem[Achille and Soatto(2018)]%
        {achille2018information}
\bibfield{author}{\bibinfo{person}{Alessandro Achille} {and}
  \bibinfo{person}{Stefano Soatto}.} \bibinfo{year}{2018}\natexlab{}.
\newblock \showarticletitle{Information dropout: Learning optimal
  representations through noisy computation}.
\newblock \bibinfo{journal}{\emph{IEEE transactions on pattern analysis and
  machine intelligence}} \bibinfo{volume}{40}, \bibinfo{number}{12}
  (\bibinfo{year}{2018}), \bibinfo{pages}{2897--2905}.
\newblock


\bibitem[Alemi et~al\mbox{.}(2016)]%
        {alemi2016deep}
\bibfield{author}{\bibinfo{person}{Alexander~A Alemi}, \bibinfo{person}{Ian
  Fischer}, \bibinfo{person}{Joshua~V Dillon}, {and} \bibinfo{person}{Kevin
  Murphy}.} \bibinfo{year}{2016}\natexlab{}.
\newblock \showarticletitle{Deep variational information bottleneck}.
\newblock \bibinfo{journal}{\emph{arXiv preprint arXiv:1612.00410}}
  (\bibinfo{year}{2016}).
\newblock


\bibitem[Amari(1998)]%
        {amari1998natural}
\bibfield{author}{\bibinfo{person}{Shun-Ichi Amari}.}
  \bibinfo{year}{1998}\natexlab{}.
\newblock \showarticletitle{Natural gradient works efficiently in learning}.
\newblock \bibinfo{journal}{\emph{Neural computation}} \bibinfo{volume}{10},
  \bibinfo{number}{2} (\bibinfo{year}{1998}), \bibinfo{pages}{251--276}.
\newblock


\bibitem[Amari(2002)]%
        {amari2002information}
\bibfield{author}{\bibinfo{person}{S-I Amari}.}
  \bibinfo{year}{2002}\natexlab{}.
\newblock \showarticletitle{Information geometry on hierarchy of probability
  distributions}.
\newblock \bibinfo{journal}{\emph{IEEE transactions on information theory}}
  \bibinfo{volume}{47}, \bibinfo{number}{5} (\bibinfo{year}{2002}),
  \bibinfo{pages}{1701--1711}.
\newblock


\bibitem[Amari(2016)]%
        {amari2016information}
\bibfield{author}{\bibinfo{person}{Shun-ichi Amari}.}
  \bibinfo{year}{2016}\natexlab{}.
\newblock \bibinfo{booktitle}{\emph{Information geometry and its
  applications}}. Vol.~\bibinfo{volume}{194}.
\newblock \bibinfo{publisher}{Springer}.
\newblock


\bibitem[Amari and Nagaoka(2000)]%
        {amari2000methods}
\bibfield{author}{\bibinfo{person}{Shun-ichi Amari} {and}
  \bibinfo{person}{Hiroshi Nagaoka}.} \bibinfo{year}{2000}\natexlab{}.
\newblock \bibinfo{booktitle}{\emph{Methods of information geometry}}.
  Vol.~\bibinfo{volume}{191}.
\newblock \bibinfo{publisher}{American Mathematical Soc.}
\newblock


\bibitem[Belghazi et~al\mbox{.}(2018)]%
        {belghazi2018mutual}
\bibfield{author}{\bibinfo{person}{Mohamed~Ishmael Belghazi},
  \bibinfo{person}{Aristide Baratin}, \bibinfo{person}{Sai Rajeshwar},
  \bibinfo{person}{Sherjil Ozair}, \bibinfo{person}{Yoshua Bengio},
  \bibinfo{person}{Aaron Courville}, {and} \bibinfo{person}{Devon Hjelm}.}
  \bibinfo{year}{2018}\natexlab{}.
\newblock \showarticletitle{Mutual information neural estimation}. In
  \bibinfo{booktitle}{\emph{International conference on machine learning}}.
  PMLR, \bibinfo{pages}{531--540}.
\newblock


\bibitem[Butakov et~al\mbox{.}(2024)]%
        {butakov2024mutual}
\bibfield{author}{\bibinfo{person}{Ivan Butakov}, \bibinfo{person}{Aleksandr
  Tolmachev}, \bibinfo{person}{Sofia Malanchuk}, \bibinfo{person}{Anna
  Neopryatnaya}, {and} \bibinfo{person}{Alexey Frolov}.}
  \bibinfo{year}{2024}\natexlab{}.
\newblock \showarticletitle{Mutual information estimation via normalizing
  flows}.
\newblock \bibinfo{journal}{\emph{Advances in Neural Information Processing
  Systems}}  \bibinfo{volume}{37} (\bibinfo{year}{2024}),
  \bibinfo{pages}{3027--3057}.
\newblock


\bibitem[Cheng et~al\mbox{.}(2020)]%
        {cheng2020club}
\bibfield{author}{\bibinfo{person}{Pengyu Cheng}, \bibinfo{person}{Weituo Hao},
  \bibinfo{person}{Shuyang Dai}, \bibinfo{person}{Jiachang Liu},
  \bibinfo{person}{Zhe Gan}, {and} \bibinfo{person}{Lawrence Carin}.}
  \bibinfo{year}{2020}\natexlab{}.
\newblock \showarticletitle{Club: A contrastive log-ratio upper bound of mutual
  information}. In \bibinfo{booktitle}{\emph{International conference on
  machine learning}}. PMLR, \bibinfo{pages}{1779--1788}.
\newblock


\bibitem[Deng(2012)]%
        {deng2012mnist}
\bibfield{author}{\bibinfo{person}{Li Deng}.} \bibinfo{year}{2012}\natexlab{}.
\newblock \showarticletitle{The mnist database of handwritten digit images for
  machine learning research [best of the web]}.
\newblock \bibinfo{journal}{\emph{IEEE signal processing magazine}}
  \bibinfo{volume}{29}, \bibinfo{number}{6} (\bibinfo{year}{2012}),
  \bibinfo{pages}{141--142}.
\newblock


\bibitem[Dubois et~al\mbox{.}(2021)]%
        {dubois2021lossy}
\bibfield{author}{\bibinfo{person}{Yann Dubois}, \bibinfo{person}{Benjamin
  Bloem-Reddy}, \bibinfo{person}{Karen Ullrich}, {and} \bibinfo{person}{Chris~J
  Maddison}.} \bibinfo{year}{2021}\natexlab{}.
\newblock \showarticletitle{Lossy compression for lossless prediction}.
\newblock \bibinfo{journal}{\emph{Advances in Neural Information Processing
  Systems}}  \bibinfo{volume}{34} (\bibinfo{year}{2021}),
  \bibinfo{pages}{14014--14028}.
\newblock


\bibitem[Franzese et~al\mbox{.}(2023)]%
        {franzese2023minde}
\bibfield{author}{\bibinfo{person}{Giulio Franzese}, \bibinfo{person}{Mustapha
  Bounoua}, {and} \bibinfo{person}{Pietro Michiardi}.}
  \bibinfo{year}{2023}\natexlab{}.
\newblock \showarticletitle{MINDE: Mutual information neural diffusion
  estimation}.
\newblock \bibinfo{journal}{\emph{arXiv preprint arXiv:2310.09031}}
  (\bibinfo{year}{2023}).
\newblock


\bibitem[Fredrikson et~al\mbox{.}(2015)]%
        {fredrikson2015model}
\bibfield{author}{\bibinfo{person}{Matt Fredrikson}, \bibinfo{person}{Somesh
  Jha}, {and} \bibinfo{person}{Thomas Ristenpart}.}
  \bibinfo{year}{2015}\natexlab{}.
\newblock \showarticletitle{Model inversion attacks that exploit confidence
  information and basic countermeasures}. In
  \bibinfo{booktitle}{\emph{Proceedings of the 22nd ACM SIGSAC conference on
  computer and communications security}}. \bibinfo{pages}{1322--1333}.
\newblock


\bibitem[Hong et~al\mbox{.}(2025)]%
        {hong2025comprehensive}
\bibfield{author}{\bibinfo{person}{Jung-Ho Hong}, \bibinfo{person}{Ho-Joong
  Kim}, \bibinfo{person}{Kyu-Sung Jeon}, {and} \bibinfo{person}{Seong-Whan
  Lee}.} \bibinfo{year}{2025}\natexlab{}.
\newblock \showarticletitle{Comprehensive Information Bottleneck for Unveiling
  Universal Attribution to Interpret Vision Transformers}. In
  \bibinfo{booktitle}{\emph{Proceedings of the Computer Vision and Pattern
  Recognition Conference}}. \bibinfo{pages}{25166--25175}.
\newblock


\bibitem[Hu et~al\mbox{.}(2024)]%
        {hu2024structured}
\bibfield{author}{\bibinfo{person}{Dou Hu}, \bibinfo{person}{Lingwei Wei},
  \bibinfo{person}{Yaxin Liu}, \bibinfo{person}{Wei Zhou}, {and}
  \bibinfo{person}{Songlin Hu}.} \bibinfo{year}{2024}\natexlab{}.
\newblock \showarticletitle{Structured probabilistic coding}. In
  \bibinfo{booktitle}{\emph{Proceedings of the AAAI Conference on Artificial
  Intelligence}}, Vol.~\bibinfo{volume}{38}. \bibinfo{pages}{12491--12501}.
\newblock


\bibitem[Kallenberg(1997)]%
        {kallenberg1997foundations}
\bibfield{author}{\bibinfo{person}{Olav Kallenberg}.}
  \bibinfo{year}{1997}\natexlab{}.
\newblock \bibinfo{booktitle}{\emph{Foundations of modern probability}}.
\newblock \bibinfo{publisher}{Springer}.
\newblock


\bibitem[Kolchinsky et~al\mbox{.}(2019)]%
        {kolchinsky2019nonlinear}
\bibfield{author}{\bibinfo{person}{Artemy Kolchinsky},
  \bibinfo{person}{Brendan~D Tracey}, {and} \bibinfo{person}{David~H Wolpert}.}
  \bibinfo{year}{2019}\natexlab{}.
\newblock \showarticletitle{Nonlinear information bottleneck}.
\newblock \bibinfo{journal}{\emph{Entropy}} \bibinfo{volume}{21},
  \bibinfo{number}{12} (\bibinfo{year}{2019}), \bibinfo{pages}{1181}.
\newblock


\bibitem[Krizhevsky et~al\mbox{.}(2009)]%
        {krizhevsky2009learning}
\bibfield{author}{\bibinfo{person}{Alex Krizhevsky}, \bibinfo{person}{Geoffrey
  Hinton}, {et~al\mbox{.}}} \bibinfo{year}{2009}\natexlab{}.
\newblock \showarticletitle{Learning multiple layers of features from tiny
  images}.
\newblock  (\bibinfo{year}{2009}).
\newblock


\bibitem[Letizia et~al\mbox{.}(2024)]%
        {letizia2024mutual}
\bibfield{author}{\bibinfo{person}{Nunzio~Alexandro Letizia},
  \bibinfo{person}{Nicola Novello}, {and} \bibinfo{person}{Andrea~M Tonello}.}
  \bibinfo{year}{2024}\natexlab{}.
\newblock \showarticletitle{Mutual Information Estimation via $ f $-Divergence
  and Data Derangements}.
\newblock \bibinfo{journal}{\emph{Advances in Neural Information Processing
  Systems}}  \bibinfo{volume}{37} (\bibinfo{year}{2024}),
  \bibinfo{pages}{105114--105150}.
\newblock


\bibitem[Li et~al\mbox{.}(2022)]%
        {li2022invariant}
\bibfield{author}{\bibinfo{person}{Bo Li}, \bibinfo{person}{Yifei Shen},
  \bibinfo{person}{Yezhen Wang}, \bibinfo{person}{Wenzhen Zhu},
  \bibinfo{person}{Dongsheng Li}, \bibinfo{person}{Kurt Keutzer}, {and}
  \bibinfo{person}{Han Zhao}.} \bibinfo{year}{2022}\natexlab{}.
\newblock \showarticletitle{Invariant information bottleneck for domain
  generalization}. In \bibinfo{booktitle}{\emph{Proceedings of the AAAI
  Conference on Artificial Intelligence}}, Vol.~\bibinfo{volume}{36}.
  \bibinfo{pages}{7399--7407}.
\newblock


\bibitem[Li et~al\mbox{.}(2025)]%
        {li2025contrastive}
\bibfield{author}{\bibinfo{person}{Jin Li}, \bibinfo{person}{Yaoming Wang},
  \bibinfo{person}{Xiaopeng Zhang}, \bibinfo{person}{Dongsheng Jiang},
  \bibinfo{person}{Wenrui Dai}, \bibinfo{person}{Chenglin Li},
  \bibinfo{person}{Hongkai Xiong}, {and} \bibinfo{person}{Qi Tian}.}
  \bibinfo{year}{2025}\natexlab{}.
\newblock \showarticletitle{Contrastive Learning via Variational Information
  Bottleneck}.
\newblock \bibinfo{journal}{\emph{IEEE Transactions on Pattern Analysis and
  Machine Intelligence}} (\bibinfo{year}{2025}).
\newblock


\bibitem[Liu et~al\mbox{.}(2018)]%
        {liu2018large}
\bibfield{author}{\bibinfo{person}{Ziwei Liu}, \bibinfo{person}{Ping Luo},
  \bibinfo{person}{Xiaogang Wang}, {and} \bibinfo{person}{Xiaoou Tang}.}
  \bibinfo{year}{2018}\natexlab{}.
\newblock \showarticletitle{Large-scale celebfaces attributes (celeba)
  dataset}.
\newblock \bibinfo{journal}{\emph{Retrieved August}} \bibinfo{volume}{15},
  \bibinfo{number}{2018} (\bibinfo{year}{2018}), \bibinfo{pages}{11}.
\newblock


\bibitem[Martens et~al\mbox{.}(2018)]%
        {martens2018kronecker}
\bibfield{author}{\bibinfo{person}{James Martens}, \bibinfo{person}{Jimmy Ba},
  {and} \bibinfo{person}{Matt Johnson}.} \bibinfo{year}{2018}\natexlab{}.
\newblock \showarticletitle{Kronecker-factored curvature approximations for
  recurrent neural networks}. In \bibinfo{booktitle}{\emph{International
  Conference on Learning Representations}}.
\newblock


\bibitem[Martens and Grosse(2015)]%
        {martens2015optimizing}
\bibfield{author}{\bibinfo{person}{James Martens} {and} \bibinfo{person}{Roger
  Grosse}.} \bibinfo{year}{2015}\natexlab{}.
\newblock \showarticletitle{Optimizing neural networks with kronecker-factored
  approximate curvature}. In \bibinfo{booktitle}{\emph{International conference
  on machine learning}}. PMLR, \bibinfo{pages}{2408--2417}.
\newblock


\bibitem[McAllester and Stratos(2020)]%
        {mcallester2020formal}
\bibfield{author}{\bibinfo{person}{David McAllester} {and}
  \bibinfo{person}{Karl Stratos}.} \bibinfo{year}{2020}\natexlab{}.
\newblock \showarticletitle{Formal limitations on the measurement of mutual
  information}. In \bibinfo{booktitle}{\emph{International Conference on
  Artificial Intelligence and Statistics}}. PMLR, \bibinfo{pages}{875--884}.
\newblock


\bibitem[Ollivier et~al\mbox{.}(2017)]%
        {ollivier2017information}
\bibfield{author}{\bibinfo{person}{Yann Ollivier}, \bibinfo{person}{Ludovic
  Arnold}, \bibinfo{person}{Anne Auger}, {and} \bibinfo{person}{Nikolaus
  Hansen}.} \bibinfo{year}{2017}\natexlab{}.
\newblock \showarticletitle{Information-geometric optimization algorithms: A
  unifying picture via invariance principles}.
\newblock \bibinfo{journal}{\emph{Journal of Machine Learning Research}}
  \bibinfo{volume}{18}, \bibinfo{number}{18} (\bibinfo{year}{2017}),
  \bibinfo{pages}{1--65}.
\newblock


\bibitem[Razeghi et~al\mbox{.}(2023)]%
        {razeghi2023bottlenecks}
\bibfield{author}{\bibinfo{person}{Behrooz Razeghi}, \bibinfo{person}{Flavio~P
  Calmon}, \bibinfo{person}{Deniz Gunduz}, {and} \bibinfo{person}{Slava
  Voloshynovskiy}.} \bibinfo{year}{2023}\natexlab{}.
\newblock \showarticletitle{Bottlenecks CLUB: Unifying information-theoretic
  trade-offs among complexity, leakage, and utility}.
\newblock \bibinfo{journal}{\emph{IEEE Transactions on Information Forensics
  and Security}}  \bibinfo{volume}{18} (\bibinfo{year}{2023}),
  \bibinfo{pages}{2060--2075}.
\newblock


\bibitem[Rosati et~al\mbox{.}(2024)]%
        {rosati2024representation}
\bibfield{author}{\bibinfo{person}{Domenic Rosati}, \bibinfo{person}{Jan
  Wehner}, \bibinfo{person}{Kai Williams}, \bibinfo{person}{Lukasz Bartoszcze},
  \bibinfo{person}{Robie Gonzales}, \bibinfo{person}{Subhabrata Majumdar},
  \bibinfo{person}{Hassan Sajjad}, \bibinfo{person}{Frank Rudzicz},
  {et~al\mbox{.}}} \bibinfo{year}{2024}\natexlab{}.
\newblock \showarticletitle{Representation noising: A defence mechanism against
  harmful finetuning}.
\newblock \bibinfo{journal}{\emph{Advances in Neural Information Processing
  Systems}}  \bibinfo{volume}{37} (\bibinfo{year}{2024}),
  \bibinfo{pages}{12636--12676}.
\newblock


\bibitem[Ross and Doshi-Velez(2018)]%
        {ross2018improving}
\bibfield{author}{\bibinfo{person}{Andrew Ross} {and} \bibinfo{person}{Finale
  Doshi-Velez}.} \bibinfo{year}{2018}\natexlab{}.
\newblock \showarticletitle{Improving the adversarial robustness and
  interpretability of deep neural networks by regularizing their input
  gradients}. In \bibinfo{booktitle}{\emph{Proceedings of the AAAI conference
  on artificial intelligence}}, Vol.~\bibinfo{volume}{32}.
\newblock


\bibitem[Shokri et~al\mbox{.}(2017)]%
        {shokri2017membership}
\bibfield{author}{\bibinfo{person}{Reza Shokri}, \bibinfo{person}{Marco
  Stronati}, \bibinfo{person}{Congzheng Song}, {and} \bibinfo{person}{Vitaly
  Shmatikov}.} \bibinfo{year}{2017}\natexlab{}.
\newblock \showarticletitle{Membership inference attacks against machine
  learning models}. In \bibinfo{booktitle}{\emph{2017 IEEE symposium on
  security and privacy (SP)}}. IEEE, \bibinfo{pages}{3--18}.
\newblock


\bibitem[Shwartz-Ziv and Tishby(2017)]%
        {shwartz2017opening}
\bibfield{author}{\bibinfo{person}{Ravid Shwartz-Ziv} {and}
  \bibinfo{person}{Naftali Tishby}.} \bibinfo{year}{2017}\natexlab{}.
\newblock \showarticletitle{Opening the black box of deep neural networks via
  information}.
\newblock \bibinfo{journal}{\emph{arXiv preprint arXiv:1703.00810}}
  (\bibinfo{year}{2017}).
\newblock


\bibitem[Tishby et~al\mbox{.}(2000)]%
        {tishby2000information}
\bibfield{author}{\bibinfo{person}{Naftali Tishby}, \bibinfo{person}{Fernando~C
  Pereira}, {and} \bibinfo{person}{William Bialek}.}
  \bibinfo{year}{2000}\natexlab{}.
\newblock \showarticletitle{The information bottleneck method}.
\newblock \bibinfo{journal}{\emph{arXiv preprint physics/0004057}}
  (\bibinfo{year}{2000}).
\newblock


\bibitem[Wan et~al\mbox{.}(2021)]%
        {wan2021multi}
\bibfield{author}{\bibinfo{person}{Zhibin Wan}, \bibinfo{person}{Changqing
  Zhang}, \bibinfo{person}{Pengfei Zhu}, {and} \bibinfo{person}{Qinghua Hu}.}
  \bibinfo{year}{2021}\natexlab{}.
\newblock \showarticletitle{Multi-view information-bottleneck representation
  learning}. In \bibinfo{booktitle}{\emph{Proceedings of the AAAI conference on
  artificial intelligence}}, Vol.~\bibinfo{volume}{35}.
  \bibinfo{pages}{10085--10092}.
\newblock


\bibitem[Wang et~al\mbox{.}(2024)]%
        {wang2024scu}
\bibfield{author}{\bibinfo{person}{Weiqi Wang}, \bibinfo{person}{Zhiyi Tian},
  \bibinfo{person}{Chenhan Zhang}, {and} \bibinfo{person}{Shui Yu}.}
  \bibinfo{year}{2024}\natexlab{}.
\newblock \showarticletitle{Scu: An efficient machine unlearning scheme for
  deep learning enabled semantic communications}.
\newblock \bibinfo{journal}{\emph{IEEE Transactions on Information Forensics
  and Security}} (\bibinfo{year}{2024}).
\newblock


\bibitem[Wu et~al\mbox{.}(2020)]%
        {wu2020learnability}
\bibfield{author}{\bibinfo{person}{Tailin Wu}, \bibinfo{person}{Ian Fischer},
  \bibinfo{person}{Isaac~L Chuang}, {and} \bibinfo{person}{Max Tegmark}.}
  \bibinfo{year}{2020}\natexlab{}.
\newblock \showarticletitle{Learnability for the information bottleneck}. In
  \bibinfo{booktitle}{\emph{Uncertainty in Artificial Intelligence}}. PMLR,
  \bibinfo{pages}{1050--1060}.
\newblock


\bibitem[Xie et~al\mbox{.}(2023)]%
        {xie2023robust}
\bibfield{author}{\bibinfo{person}{Songjie Xie}, \bibinfo{person}{Shuai Ma},
  \bibinfo{person}{Ming Ding}, \bibinfo{person}{Yuanming Shi},
  \bibinfo{person}{Mingjian Tang}, {and} \bibinfo{person}{Youlong Wu}.}
  \bibinfo{year}{2023}\natexlab{}.
\newblock \showarticletitle{Robust information bottleneck for task-oriented
  communication with digital modulation}.
\newblock \bibinfo{journal}{\emph{IEEE Journal on Selected Areas in
  Communications}} \bibinfo{volume}{41}, \bibinfo{number}{8}
  (\bibinfo{year}{2023}), \bibinfo{pages}{2577--2591}.
\newblock


\bibitem[Xu et~al\mbox{.}(2022)]%
        {xu2022infoat}
\bibfield{author}{\bibinfo{person}{Mengting Xu}, \bibinfo{person}{Tao Zhang},
  \bibinfo{person}{Zhongnian Li}, {and} \bibinfo{person}{Daoqiang Zhang}.}
  \bibinfo{year}{2022}\natexlab{}.
\newblock \showarticletitle{InfoAT: Improving adversarial training using the
  information bottleneck principle}.
\newblock \bibinfo{journal}{\emph{IEEE Transactions on Neural Networks and
  Learning Systems}} \bibinfo{volume}{35}, \bibinfo{number}{1}
  (\bibinfo{year}{2022}), \bibinfo{pages}{1255--1264}.
\newblock


\bibitem[Xu et~al\mbox{.}(2024)]%
        {xu2024sctnet}
\bibfield{author}{\bibinfo{person}{Zhengze Xu}, \bibinfo{person}{Dongyue Wu},
  \bibinfo{person}{Changqian Yu}, \bibinfo{person}{Xiangxiang Chu},
  \bibinfo{person}{Nong Sang}, {and} \bibinfo{person}{Changxin Gao}.}
  \bibinfo{year}{2024}\natexlab{}.
\newblock \showarticletitle{Sctnet: Single-branch cnn with transformer semantic
  information for real-time segmentation}. In
  \bibinfo{booktitle}{\emph{Proceedings of the AAAI conference on artificial
  intelligence}}, Vol.~\bibinfo{volume}{38}. \bibinfo{pages}{6378--6386}.
\newblock


\bibitem[Yang et~al\mbox{.}(2025)]%
        {yang2025structured}
\bibfield{author}{\bibinfo{person}{Hanzhe Yang}, \bibinfo{person}{Youlong Wu},
  \bibinfo{person}{Dingzhu Wen}, \bibinfo{person}{Yong Zhou}, {and}
  \bibinfo{person}{Yuanming Shi}.} \bibinfo{year}{2025}\natexlab{}.
\newblock \showarticletitle{Structured IB: Improving Information Bottleneck
  with Structured Feature Learning}. In \bibinfo{booktitle}{\emph{Proceedings
  of the AAAI Conference on Artificial Intelligence}},
  Vol.~\bibinfo{volume}{39}. \bibinfo{pages}{21922--21928}.
\newblock


\bibitem[Yu et~al\mbox{.}(2024)]%
        {yu2024cauchyschwarz}
\bibfield{author}{\bibinfo{person}{Shujian Yu}, \bibinfo{person}{Xi Yu},
  \bibinfo{person}{Sigurd L{\o}kse}, \bibinfo{person}{Robert Jenssen}, {and}
  \bibinfo{person}{Jose~C Principe}.} \bibinfo{year}{2024}\natexlab{}.
\newblock \showarticletitle{Cauchy-Schwarz Divergence Information Bottleneck
  for Regression}. In \bibinfo{booktitle}{\emph{The Twelfth International
  Conference on Learning Representations}}.
\newblock


\bibitem[Zhai and Zhang(2022)]%
        {zhai2022adversarial}
\bibfield{author}{\bibinfo{person}{Penglong Zhai} {and} \bibinfo{person}{Shihua
  Zhang}.} \bibinfo{year}{2022}\natexlab{}.
\newblock \showarticletitle{Adversarial information bottleneck}.
\newblock \bibinfo{journal}{\emph{IEEE Transactions on Neural Networks and
  Learning Systems}} \bibinfo{volume}{35}, \bibinfo{number}{1}
  (\bibinfo{year}{2022}), \bibinfo{pages}{221--230}.
\newblock


\end{thebibliography}

\clearpage
\appendix

\section{Proof of Equation \ref{eq:ig-pyth}}  \label{proof_of_ig_pyth}

\begin{proof}
 	Using the log-factorization identity and Fubini/Tonelli
	\begin{align*}
		\mathrm{KL} & \!\big(p(x,z)\, \|\,q(x)r(z)\big)
		= \iint p(x,z)\,\log\frac{p(x,z)}{q(x)r(z)}\,dx\,dz \\
		&= \iint p(x,z)\,\log\frac{p(x,z)}{p_X(x)p_Z(z)}\,dx\,dz \\
		&\quad + \iint p(x,z)\,\log\frac{p_X(x)}{q(x)}\,dx\,dz
		+ \iint p(x,z)\,\log\frac{p_Z(z)}{r(z)}\,dx\,dz \\
		&= \underbrace{\mathrm{KL}\!\big(p(x,z)\,\|\,p_X(x)p_Z(z)\big)}_{=\,I(X;Z)} \\
		&\quad + \underbrace{\int p_X(x)\,\log\frac{p_X(x)}{q(x)}\,dx}_{=\,\mathrm{KL}(p_X\|q)} 
		+ \underbrace{\int p_Z(z)\,\log\frac{p_Z(z)}{r(z)}\,dz}_{=\,\mathrm{KL}(p_Z\|r)}.
	\end{align*}
	The last step uses Fubini's theorem to integrate out $z$ and $x$ respectively.
	Because each KL term is nonnegative, the minimum over $q,r$ is achieved at $q=p_X,\ r=p_Z$, with value $I(X;Z)$.
\end{proof}

\begin{remark}[Information-geometric Pythagorean relation]
	Since $\mathcal{I}$ is flat in the appropriate dual affine coordinates, the above decomposition
	is also the IG ``Pythagorean theorem'':
	\[
	\mathrm{KL}\!\big(p\,\|\,q r\big)
	= \mathrm{KL}\!\big(p\,\|\,p_X p_Z\big)
	+ \mathrm{KL}\!\big(p_X p_Z\,\|\,q r\big), \qquad q r \in \mathcal{I}.
	\]
	Thus $p_X p_Z$ is the e-projection of $p$ onto $\mathcal{I}$, and $I(X;Z)$ is exactly the
	projection distance.
\end{remark}

\section{Proof of local second–order (FR).}
\label{proof_of_FR}

We show that for a regular parametric family $\{p_\theta : \theta\in\Theta\subset\mathbb{R}^d\}$ and $\theta'$ near $\theta$, i.e., $\Delta = \theta' - \theta$,
\[
\mathrm{KL}\!\big(p_{\theta}\,\|\,p_{\theta'}\big)
= \tfrac12\,\Delta^\top F(\theta)\,\Delta \;+\; o(\|\Delta\|^2)
= \tfrac12\, d_{\text{FR}}\!\big(p_{\theta},p_{\theta'}\big)^2 \;+\; o(\|\Delta\|^2),
\]
where $F(\theta)$ is the Fisher information and $d_{\text{FR}}$ is the Fisher–Rao distance. Throughout we assume standard regularity: common support, differentiability up to second order in $\theta$, finiteness of $F(\theta)$, and interchange of expectation and differentiation.

Write
\[
\mathrm{KL}\!\big(p_{\theta}\,\|\,p_{\theta'}\big)
= \E_{p_\theta}\!\left[\log p_\theta(Z)-\log p_{\theta'}(Z)\right].
\]
Fix $z$ and expand $\log p_{\theta'}(z)$ at $\theta$:
\[
\log p_{\theta'}(z)
= \log p_\theta(z) + \Delta^\top \nabla_\theta \log p_\theta(z)
+ \tfrac12\,\Delta^\top \nabla^2_\theta \log p_\theta(z)\,\Delta
+ o(\|\Delta\|^2).
\]
Subtract from $\log p_\theta(z)$, take $\E_{p_\theta}$, and use
$\E_{p_\theta}\!\big[\nabla_\theta \log p_\theta(Z)\big]=0$ (zero mean score) to obtain
\begin{align*}
	\mathrm{KL}\!\big(p_{\theta}\,\|\,p_{\theta'}\big)
	&= -\tfrac12\,\Delta^\top \E_{p_\theta}\!\big[\nabla^2_\theta \log p_\theta(Z)\big]\,\Delta
	\;+\; o(\|\Delta\|^2).
\end{align*}
By the information identity,
\[
F(\theta) \;=\; \E_{p_\theta}\!\big[ \nabla_\theta \log p_\theta(Z)\,\nabla_\theta \log p_\theta(Z)^\top \big]
\;=\; -\,\E_{p_\theta}\!\big[\nabla^2_\theta \log p_\theta(Z)\big],
\]
hence
\begin{equation}
	\label{eq:kl-second-order}
	\mathrm{KL}\!\big(p_{\theta}\,\|\,p_{\theta'}\big)
	= \tfrac12\,\Delta^\top F(\theta)\,\Delta \;+\; o(\|\Delta\|^2).
\end{equation}


The FR metric is the Riemannian metric on $\Theta$ given by
$g_\theta(u,v) = u^\top F(\theta)\,v$ for tangent vectors $u,v\in\mathbb{R}^d$.
Let $\gamma:[0,1]\to\Theta$ be any $C^1$ curve with $\gamma(0)=\theta$ and $\gamma(1)=\theta'$.
Its FR length is
\[
L(\gamma) \;=\; \int_0^1 \sqrt{\dot\gamma(t)^\top F(\gamma(t))\,\dot\gamma(t)}\,dt,
\]
and the FR distance is $d_{\text{FR}}(p_\theta,p_{\theta'})=\inf_\gamma L(\gamma)$.
For $\|\Delta\|\to 0$, choose the straight segment $\gamma(t)=\theta+t\Delta$ to get
\[
L(\gamma)^2
= \Big(\int_0^1 \sqrt{\Delta^\top F(\theta+t\Delta)\,\Delta}\,dt\Big)^2
= \Delta^\top F(\theta)\,\Delta \;+\; o(\|\Delta\|^2),
\]
using continuity of $F$ and a second–order expansion in $t$. Since the geodesic length is minimal,
\begin{equation}
	\label{eq:fr-local}
	d_{\text{FR}}\!\big(p_\theta,p_{\theta'}\big)^2
	= \Delta^\top F(\theta)\,\Delta \;+\; o(\|\Delta\|^2).
\end{equation}
This follows from standard Riemannian geometry: in normal coordinates at $\theta$, $d_{\text{FR}}(p_\theta,p_{\theta'})^2=\Delta^\top F(\theta)\Delta+O(\|\Delta\|^3)$, hence \eqref{eq:fr-local}. 
Combining \eqref{eq:kl-second-order} and \eqref{eq:fr-local} yields the local equivalence
\[
\mathrm{KL}\!\big(p_{\theta}\,\|\,p_{\theta'}\big)
= \tfrac12\, d_{FR}\!\big(p_{\theta},p_{\theta'}\big)^2 \;+\; o(\|\Delta\|^2).
\]


Let $q_\phi(z|x)$ and a reference marginal $r(z)$ belong to $\{p_\theta\}$ with parameters
$\theta(x)$ and $\theta_r$ respectively, and assume $\|\theta(x)-\theta_r\|$ is small for $p(x)$–almost every $x$. Applying the pointwise result above with $\Delta(x)=\theta(x)-\theta_r$ gives
\[
\mathrm{KL}\!\big(q_\phi(z|x)\,\|\,r(z)\big)
= \tfrac12\, d_{\text{FR}}\!\big(q_\phi(z|x), r(z)\big)^2 \;+\; o\!\big(\|\Delta(x)\|^2\big).
\]
Taking $\E_{p(x)}$ and using dominated convergence (guaranteed by the regularity assumptions) yields
\begin{align*}
\E_{p(x)}\mathrm{KL}\!\big(q_\phi(z|x)\,\|\,r(z)\big)
= & \tfrac12\, \E_{p(x)} d_{\text{FR}}\!\big(q_\phi(z|x), r(z)\big)^2 \; \\
 &+\; o\!\Big(\E_{p(x)}\|\Delta(x)\|^2\Big).
\end{align*}
When $r(z)=p_\phi(z)$ (the aggregate posterior), this is precisely the local second–order approximation of the compression term
$\E_{p(x)}\mathrm{KL}\!\big(q_\phi(z|x)\,\|\,p_\phi(z)\big)$. \qed

\section{Proof of Theorem 1}\label{proof_of_t1}

\begin{proof}[Proof of Theorem 1]  
	Let $v:=-\eta\,\mathrm{grad}\,\mathcal J(\phi)\in T_\phi\mathcal M$.
	By existence and uniqueness for the geodesic equation with the Levi--Civita connection of the FR metric,
	there exists $\varepsilon>0$ and a unique geodesic $\gamma_v:(-\varepsilon,\varepsilon)\to\mathcal M$
	such that $\gamma_v(0)=\phi$ and $\dot\gamma_v(0)=v$.
	
	The Riemannian exponential map at $\phi$ is defined by
	\[
	\Exp_\phi(w)\;=\;\gamma_w(1)\quad\text{whenever $1$ lies in the domain of }\gamma_w,
	\]
	equivalently $\Exp_\phi(tw)=\gamma_w(t)$ for $t$ in a neighborhood of $0$.
	(For global well-definedness one may restrict to $\|w\|$ below the injectivity radius.)
	Applying this with $w=v$ gives
	\[
	\phi^{+}\;=\;\Exp_\phi(v)\;=\;\gamma_v(1),
	\]
	i.e., $\phi^{+}$ lies on the unique FR geodesic starting at $\phi$ with initial velocity
	$\dot\gamma(0)=v=-\eta\,\mathrm{grad}\,\mathcal J(\phi)$.
\end{proof}

\section{The GeoIB Algorithm} \label{alg_of_gib}

We provide the algorithm of GeoIB as the following \Cref{alg:gib-step}.

 \begin{algorithm}[h]
 	\caption{GeoIB Natural-Gradient Step (per iteration)}\label{alg:gib-step}
 	\begin{small}
 		\BlankLine
 		\KwIn{Current params $(\phi_t,\theta_t)$; minibatch size $B$; step sizes $(\eta_\phi,\eta_\theta)$; bottleneck $\beta$; \# Hutchinson probes $S$; damping $\lambda$; Fisher approx. mode \textsf{K-FAC}}
 		\KwOut{Updated params $(\phi_{t+1},\theta_{t+1})$}
 		\SetKwFunction{GIBSTEP}{\textbf{GIB\_Step}}
 		\SetKwProg{Fn}{procedure}{:}{end procedure}
 		\SetNlSty{}{}{}
 		\Fn{\GIBSTEP{$\phi_t,\theta_t,B,\eta_\phi,\eta_\theta,\beta,S,\lambda,\textsf{mode}$}}{
 			\tcp{1) Sample minibatch and latent codes}
 			Draw minibatch $\{(x_i,y_i)\}_{i=1}^B$;\; 
 			$z_i \sim q_{\phi_t}(z\mid x_i)$ by reparameterization\;
 			\BlankLine
 			\tcp{2) Estimate FR/JF terms (Hutchinson + JVPs)}
 			Estimate $\widehat{\mathcal L}_{\mathrm{FR}}$ and $\widehat{\mathcal L}_{\mathrm{JF}}$ using $S$ probe vectors and JVPs\;
 			\BlankLine
 			\tcp{3) Compute Euclidean gradients}
 			$g_\theta \gets \nabla_{\theta}\,\frac{1}{B}\sum_{i=1}^B \big[-\log p_{\theta_t}(y_i\mid z_i)\big]$\;
 			$g_\phi \gets \nabla_{\phi}\,\Big\{\frac{1}{B}\sum_{i=1}^B \big[-\log p_{\theta_t}(y_i\mid z_i)\big] 
 			+ \beta\big(\widehat{\mathcal L}_{\mathrm{FR}}+\widehat{\mathcal L}_{\mathrm{JF}}\big)\Big\}$\;
 			\BlankLine
 			\tcp{4) Build Fisher approximations (Empirical Fisher or K-FAC)}
 			\uIf{\textsf{mode} = \textsf{K-FAC}}{
 				Build layerwise Kronecker factors for encoder/decoder to obtain $\widehat F_\phi$ and $\widehat F_\theta$\;
 			}
 			$\widehat F_\theta^\lambda \gets \widehat F_\theta + \lambda I$, \quad $\widehat F_\phi^\lambda \gets \widehat F_\phi + \lambda I$\;
 			\BlankLine
 			\tcp{5) Solve for natural directions (no explicit inversion)}
 			Find $v_\theta$ s.t. $(\widehat F_\theta^\lambda)\,v_\theta = g_\theta$ via Conjugate Gradient (CG)\;
 			Find $v_\phi$ s.t. $(\widehat F_\phi^\lambda)\,v_\phi = g_\phi$ via CG\;
 			\BlankLine
 			\tcp{6) Parameter updates (natural gradients)}
 			$\theta_{t+1} \gets \theta_t - \eta_\theta\, v_\theta$ \hfill (\Cref{eq:decoder-natgrad})\;
 			$\phi_{t+1} \gets \phi_t - \eta_\phi\, v_\phi$ \hfill (\Cref{eq:natgrad-step})\;
 			\BlankLine
 			\Return $(\phi_{t+1},\theta_{t+1})$\;
 		}
 	\end{small}
 \end{algorithm}
 
Algorithm~\ref{alg:gib-step} executes one GeoIB training iteration with \emph{natural-gradient} updates for the decoder $\theta$ and encoder $\phi$ under a K-FAC curvature mode. 
(Step 1) Given a minibatch $\{(x_i,y_i)\}_{i=1}^B$, latent codes are drawn via the reparameterized posterior $z_i\!\sim q_{\phi_t}(z\mid x_i)$. 
(Step 2) The geometry-aware bottleneck surrogates are estimated without forming full Jacobians: the Fisher–Rao proxy $\widehat{\mathcal L}_{\mathrm{FR}}$ and the Jacobian–Frobenius penalty $\widehat{\mathcal L}_{\mathrm{JF}}$ are computed using $S$ Hutchinson probe vectors together with Jacobian–vector products (JVPs). 
(Step 3) We then compute the Euclidean gradients of the decoder NLL and the full encoder objective,
\begin{align*}
	g_\theta=\nabla_\theta\,\tfrac{1}{B}\!\sum_{i=1}^B\!\big[-\log p_{\theta_t}(y_i\mid z_i)\big],\qquad \\
	g_\phi=\nabla_\phi\!\left\{\tfrac{1}{B}\!\sum_{i=1}^B\!\big[-\log p_{\theta_t}(y_i\mid z_i)\big]+\beta\!\big(\widehat{\mathcal L}_{\mathrm{FR}}+\widehat{\mathcal L}_{\mathrm{JF}}\big)\right\}.
\end{align*}

(Step 4) To obtain natural directions, we build \emph{layerwise} Kronecker-factored Fisher approximations for both networks. For each layer $\ell$ with weight matrix $W_\ell$, K-FAC uses the block-diagonal model
$F_\ell \approx A_\ell \otimes G_\ell$,
where $A_\ell:=\tfrac{1}{B}\sum_i a_{\ell,i}a_{\ell,i}^\top$ is the covariance of layer inputs (activations) and $G_\ell:=\tfrac{1}{B}\sum_i g_{\ell,i}g_{\ell,i}^\top$ is the covariance of backpropagated pre-activation gradients. Tikhonov damping yields $\widehat F_\theta^\lambda=\widehat F_\theta+\lambda I$ and $\widehat F_\phi^\lambda=\widehat F_\phi+\lambda I$. 
(Step 5) Rather than inverting these matrices, we solve the linear systems
$(\widehat F_\theta^\lambda)v_\theta=g_\theta$ and $(\widehat F_\phi^\lambda)v_\phi=g_\phi$ via conjugate gradients (CG). Each CG iteration only needs Fisher–vector products, which K-FAC supplies efficiently: if $V_\ell$ reshapes the vector $v$ to the layer’s weight shape, then
\[
\mathrm{FVP}_\ell(v)\;=\;\mathrm{vec}\!\Big((G_\ell+\lambda I)\,V_\ell\,(A_\ell+\lambda I)\Big).
\]
(Step 6) Finally, parameters are updated along the natural directions:
$\theta_{t+1}=\theta_t-\eta_\theta v_\theta$ (cf.~\Cref{eq:decoder-natgrad}) and
$\phi_{t+1}=\phi_t-\eta_\phi v_\phi$ (cf.~\Cref{eq:natgrad-step}). 
The hyperparameters $(\eta_\phi,\eta_\theta,\beta,S,\lambda)$ control step sizes, compression strength, estimator variance, and conditioning, while K-FAC trades a faithful curvature signal for scalable, inversion-free natural-gradient steps.

\begin{table}[h]
	\scriptsize
	\caption{Dataset statistics.} 
	\label{dataset_table}
	\resizebox{\linewidth}{!}{
		\setlength\tabcolsep{7.5pt}
		\begin{tabular}{cccc}
			\toprule[0.8pt]
			Dataset & Feature Dimension  & \#. Classes & \#. Samples \\
			\midrule
			MNIST \citep{deng2012mnist} & 28×28×1 & 10 & 70,000  \\  
			\rowcolor{verylightgray}
			CIFAR10 \citep{krizhevsky2009learning} & 32×32×3 & 10 & 60,000  \\  
			CelebA~\citep{liu2018large} & 178×218×3 & 2 (Gender) & 202,599 \\
			\bottomrule[0.8pt]
	\end{tabular}}
	\vspace{-2mm}
\end{table}


\section{Datasets} \label{datasets_appendix}

The statistics of all datasets used in our experiments are listed in \Cref{dataset_table}. Both MNIST and CIFAR10 are used to train 10-class classification models. The experiment on CelebA is to identify the gender attributes of the face images. The task is a binary classification problem, different from the ones on MNIST and CIFAR10. These datasets offer a range of objective categories with varying levels of learning complexity. We also introduce them as below.



\begin{itemize}[itemsep=0pt, parsep=0pt, leftmargin=*]
	\item \textbf{MNIST~\citep{deng2012mnist}.} MNIST contains 60,000 handwritten digit images for the training and 10,000 handwritten digit images for the testing. All these black and white digits are size normalized, and centered in a fixed-size image with 28 × 28 pixels.
	\item \textbf{CIFAR10~\citep{krizhevsky2009learning}.} CIFAR10 dataset consists of 60,000 32x32 colour images in 10 classes, with 6,000 images per class. There are 50,000 training images and 10,000 test images.
	\item  \textbf{CelebA~\citep{liu2018large}.} CelebA is a large-scale face attributes dataset with more than 200,000 celebrity images, each with 40 attribute annotations.
\end{itemize}

\begin{figure*}[t] 
	\centering
	\vspace{-4mm}
	\subfloat[\scriptsize $\beta=10^{-4}$, Accuracy $=86.05\%$]{ 	\label{fig:repdemocifar00001}
		\includegraphics[scale=0.34]{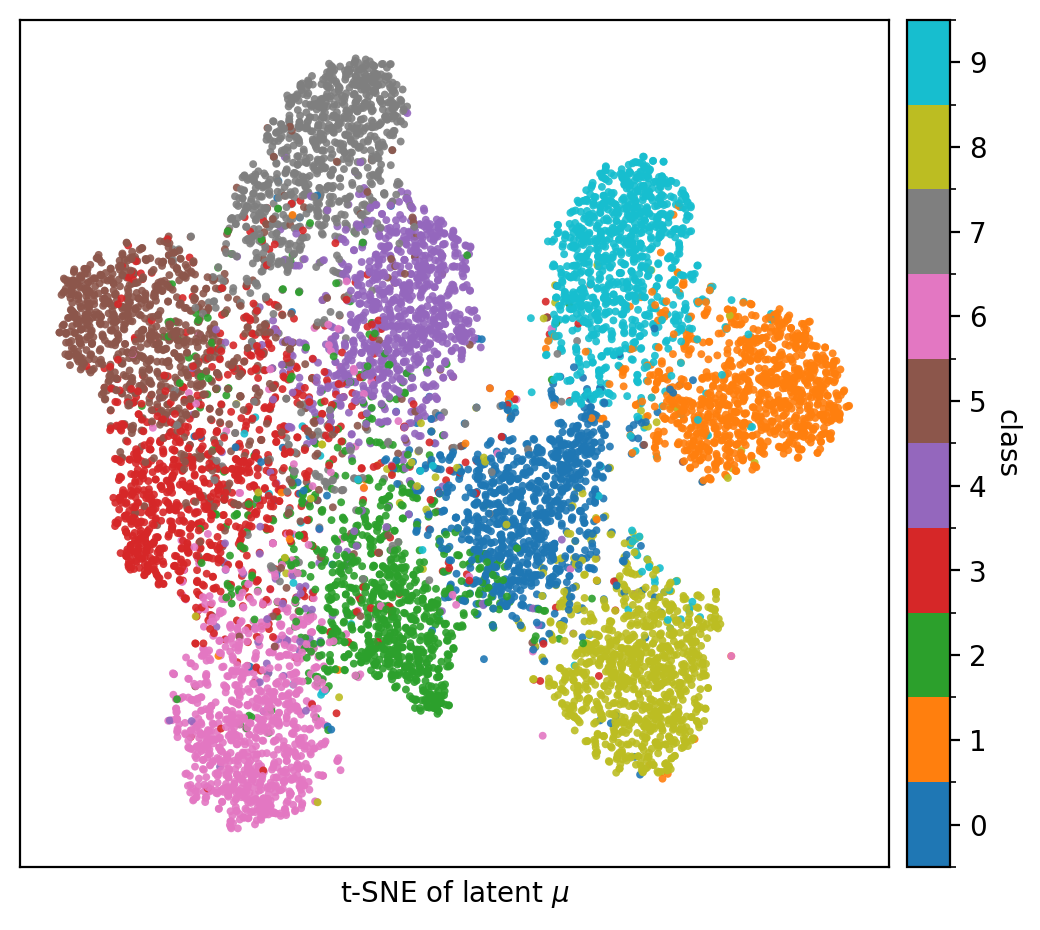}
	}			
	\subfloat[\scriptsize $\beta=10^{0}$, Accuracy $=82.05\%$]{ 	 	\label{fig:repdemocifar2}
		\includegraphics[scale=0.34]{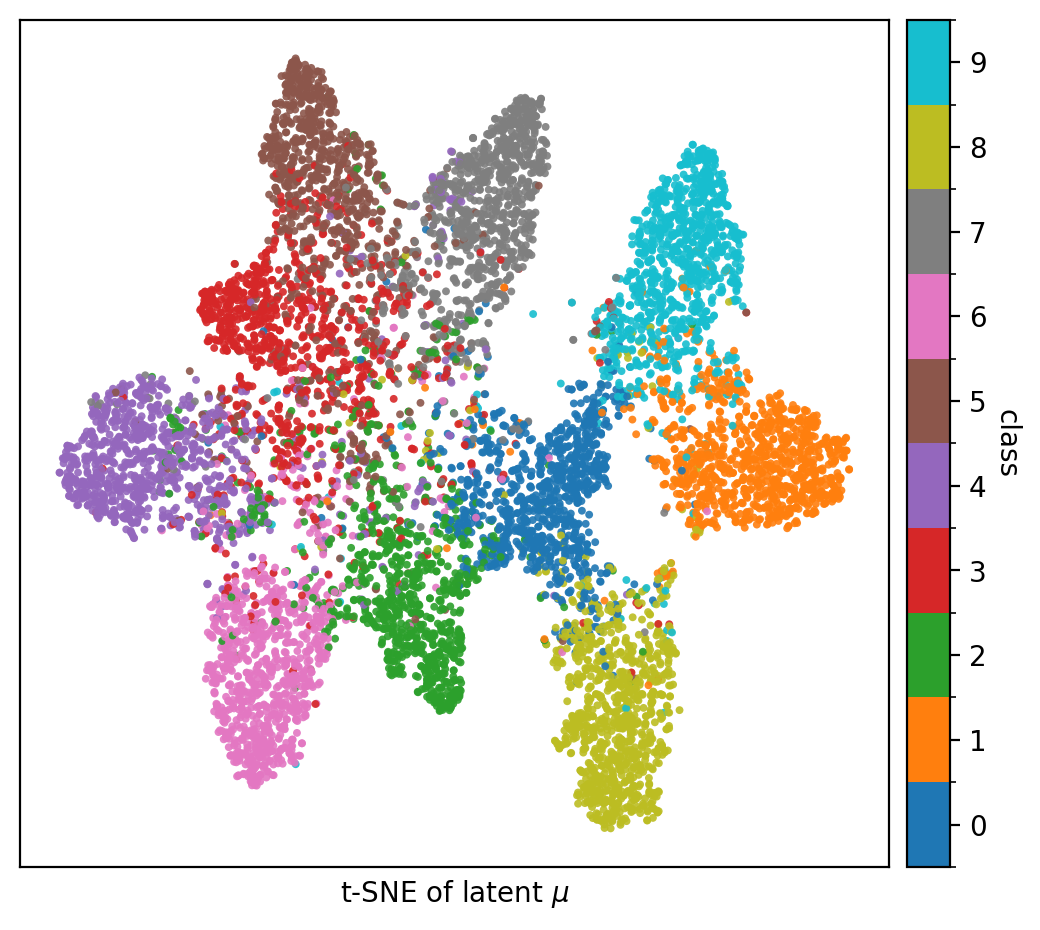}
	}
	\subfloat[\scriptsize $\beta=10^{1}$, Accuracy $=76.11\%$]{   	\label{fig:repdemocifar10}
		\includegraphics[scale=0.34]{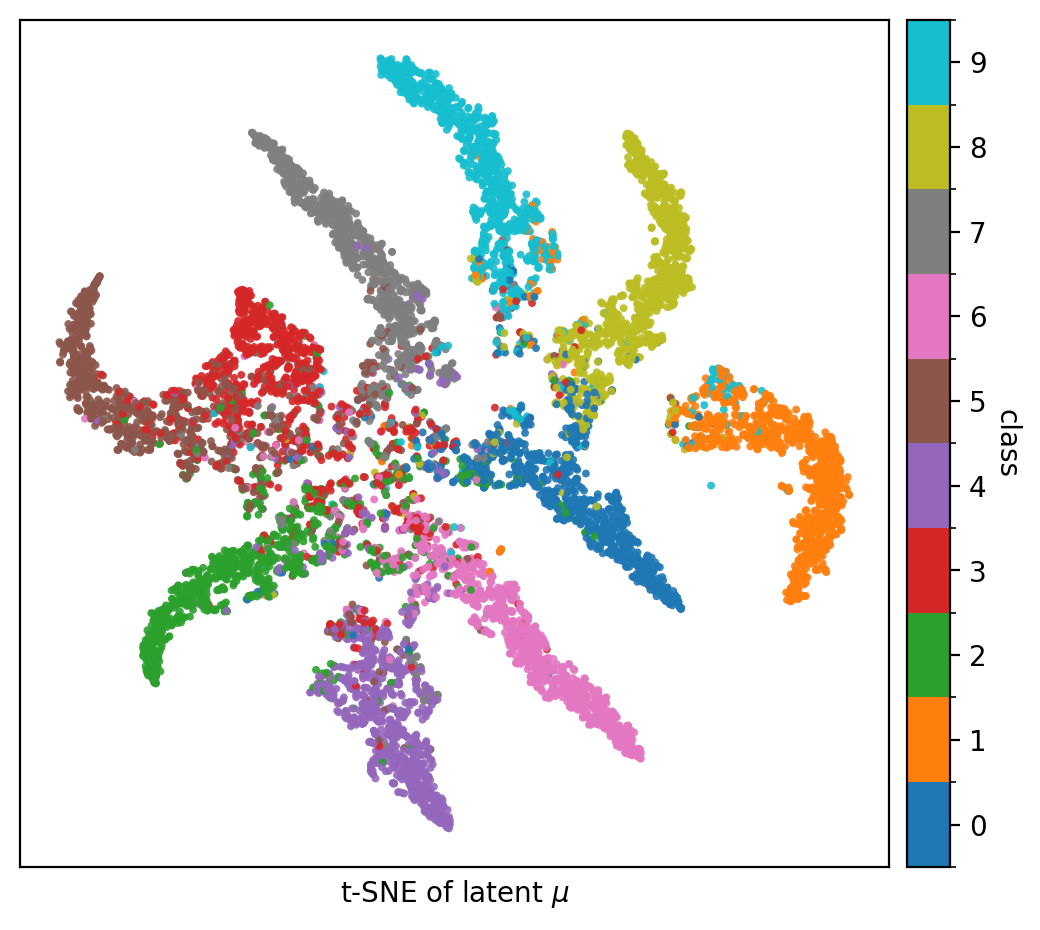}
	}  
	\caption{Visualizing representation embeddings of 10000 test images in two dimensions on CIFAR10. The images are colored according to their true class label. We $\beta$ becomes larger, we forget more about the input and the representation embedding of each class is compressed close to the average $\mu$. We also report the test accuracy, which decreases as $\beta$ increases.} 
	\label{compressive_on_cifar10} 
\end{figure*}

\section{Additional Experiments} \label{addtional_exp}

\noindent\textbf{Qualitative visualization.}
To examine how the bottleneck strength shapes the representation, we visualize a 2-D t-SNE of the representation $\mu(x)=\mathbb{E}[ Z \mid x]$ for 10{,}000 CIFAR10 test images at three values of $\beta$ (\Cref{compressive_on_cifar10}). With a small Bottleneck multiplier ($\beta=10^{-4}$), clusters are well separated and exhibit relatively large within-class spread, indicating that $Z$ retains fine-grained input details. At $\beta=10^{0}$, representation clusters contract toward class-wise prototypes while remaining separable. At $\beta=10^{1}$, representation embeddings concentrate along narrow arcs near the class centers, consistent with stronger compression and the accuracy drop observed in \Cref{evaluation_of_general_beta}.

\end{document}